\theoremstyle{plain}
\newtheorem{lemma}{Lemma}
\theoremstyle{definition}
\newtheorem{assumption}{Assumption}
\theoremstyle{remark}
\crefname{assumption}{Assumption}{Assumptions}
\crefname{theorem}{Theorem}{Theorems}
\crefname{equation}{}{}
\crefname{ALC@unique}{Line}{Lines}
\newcommand{\multitaskGP}{\textsc{MTGP}\xspace}
\newcommand{\ABLR}{\textsc{ABLR}\xspace}
\newcommand{\RGPE}{\textsc{RGPE}\xspace}
\newcommand{\SGPT}{\textsc{SGPT}\xspace}
\newcommand{\StackGP}{\textsc{StackGP}\xspace}
\newcommand{\RMBO}{\textsc{RM-GP-UCB}\xspace}
\newcommand{\cov}[1]{\mathrm{Cov}\left(#1\right)}
\newcommand{\corr}[1]{\mathrm{Corr}\left(#1\right)}
\DeclareMathOperator*{\argmax}{arg\,max}
\newcommand{\diag}{\mathrm{diag}}
\newcommand{\method}{\mathrm{ScaML}}
\newcommand{\ourmethod}{\textsc{ScaML-GP}\xspace}
\newcommand{\x}{\ensuremath{\mathbf{x}}}
\newcommand{\X}{\ensuremath{\mathbf{X}}}
\newcommand{\test}{t}
\newcommand{\meta}{m}
\newcommand{\nummeta}{M}
\newcommand{\metaset}{\mathcal{M}}
\newcommand{\metadata}{\mathcal{D}_{1:\nummeta}}
\author[1]{Petru Tighineanu}
\author[1]{Lukas Grossberger}
\author[1]{Paul Baireuther}
\author[1]{Kathrin Skubch}
\author[1]{Stefan Falkner}
\author[1]{Julia Vinogradska}
\author[1]{Felix Berkenkamp}
\affil[1]{Bosch Center for Artificial Intelligence, Renningen, Germany}
\title{Scalable Meta-Learning with Gaussian Processes}
\date{}
\begin{document}

\maketitle
\thispagestyle{empty}

\doparttoc 
\faketableofcontents 

\begin{abstract}
  Meta-learning is a powerful approach that exploits historical data to quickly solve new tasks from the same distribution. In the low-data regime, methods based on the closed-form posterior of Gaussian processes (GP) together with Bayesian optimization have achieved high performance. However, these methods are either computationally expensive or introduce assumptions that hinder a principled propagation of uncertainty between task models. This may disrupt the balance between exploration and exploitation during optimization.
  In this paper, we develop \ourmethod, a modular GP model for meta-learning that is scalable in the number of tasks. Our core contribution is a carefully designed multi-task kernel that enables hierarchical training and task scalability. Conditioning \ourmethod on the meta-data exposes its modular nature yielding a test-task prior that combines the posteriors of meta-task GPs.
  In synthetic and real-world meta-learning experiments, we demonstrate that \ourmethod can learn efficiently both with few and many meta-tasks.
\end{abstract}

\section{Introduction}

Meta-learning improves a learning system's performance on a new task by leveraging data from similar tasks \citep{pmlr-v70-finn17a}. This powerful learning paradigm has enabled numerous new applications in optimization \citep{rothfuss2022metalearning}, reinforcement learning \citep{wang2016learning} and other domains \citep{hariharan2017low}.
While meta-learning approaches that build on neural networks are highly successful in the large data setting, probabilistic models that extract more information out of scarce data have an advantage in the low data regime. In particular, methods that combine probabilistic priors in the form of Gaussian process (GP) models \citep{rasmussengp} with Bayesian optimization (BO) \citep{BO_Shahriari} have been shown to achieve high performance in the small data setting \citep{tighineanu2022transfer}. Several important problems fall into this category of scarce data tasks, e.g., optimizing the hyperparameters (HPs) of machine learning models~\citep{snoek_hpo}, materials design~\citep{zhang2020}, or tuning controller gains~\citep{calandra2016bayesian}. These tasks have in common that obtaining new data on the test-task is costly, e.g., due to materials or compute cost, wear-and-tear of a physical system, or manual effort involved. At the same time, data from many similar tasks are typically available and can be leveraged to speed up learning.

In this setting, a joint Bayesian treatment of the test data and the data from the meta-tasks is beneficial, since it accounts for uncertainty across different task functions. This handling of uncertainty is crucial: while there may be a large meta-data set, the data for each of the meta-tasks is typically scarce so that neglecting the meta-task uncertainty often leads to overconfident models. To this end, several GP-based models have been introduced that model the joint distribution across tasks \citep{NIPS2013_swersky,EnvGP,pmlr-v54-shilton17a}. However, a full Bayesian treatment is typically computationally too costly. Instead, \citet{feurer2018rgpe,wistuba2018scalable} propose to use ensembles of GPs, which scale more favorably. However, they lack a joint Bayesian task model and hyperparameter inference is done through heuristics. 

\paragraph{Contribution} We present Scalable Meta-Learning with Gaussian Processes (\ourmethod), a modular GP for meta-learning that is scalable in the number of tasks. In contrast to previous GP models, we introduce assumptions on the correlation between meta- and test-tasks and show that these lead to a posterior model that scales linearly in the number of meta-tasks and can thus be learned efficiently. Our experiments indicate that \ourmethod outperforms existing methods in the low-data setting that we focus on.

\section{Related Work}
\label{sec:related_work}
Meta-learning involves learning how to speed up learning new tasks given data from similar tasks \citep{meta_learn_schmidhuber, bengio_1991}. There exists a large body of literature on meta-learning different key components of optimization: meta-learning the whole optimizer \citep{pmlr-v70-chen17e, li2017learning, grad_descent_by_grad_descent,metz2018learning}, the model \citep{pmlr-v70-finn17a,ABLR,flennerhag2018transferring,Wistuba2021few}, or the acquisition function \citep{Volpp2020Meta-Learning}. While these methods are powerful when data is abundant, they are not suited for the low-data regime of global black-box optimization of expensive functions.

In the low-data regime, GP-based approaches are dominant due to their sample efficiency. The uncertainty estimates of GP posteriors are highly informative for identifying interesting regions of the search space. This well-calibrated uncertainty estimate comes at the price of training a GP on the full meta- and test-data jointly. Approaches that follow this path \citep{NIPS2013_swersky,pmlr-v33-yogatama14,NIPS2017_MISO,EnvGP,pmlr-v54-shilton17a} perform favorably with little meta-data. However, their computational cost is cubic in the overall number of data points and quickly becomes prohibitive for common meta-learning scenarios. While several approximations scaling GPs to larger data sets exist \citep{Liu2020ScalableGPs} and can in principle be combined with the above approaches, none of these directly applies to the meta-learning scenario that we consider in this paper.
An interesting approach to achieve task scalability is learning a parametric GP prior on the meta-data \citep{Wang2021pre}.

Another explored avenue that ensures scalability is training GP models for each meta- and test-task, and combining their predictions to inform the search for the global optimum. 
\citet{Dai2022provably} fit per-task GPs and model the acquisition function as a weighted sum of task-based acquisition functions. This approach is scalable in the number of tasks but relies on a heuristic weighing of the relative importance of the meta- and test tasks.
\citet{golovin2017stackgp} build a hierarchical model where only the information about the mean prediction is propagated from the meta-tasks. 
\citet{feurer2018rgpe,wistuba2018scalable} build a GP ensemble with ranking-based weights for all task GPs. These methods account for some uncertainty propagation from the meta-tasks in a heuristic way. 
Our method combines the best of both worlds by being scalable in the number of meta-tasks \emph{and} having a well-calibrated uncertainty estimate by defining a joint probability distribution over all data.

\section{Problem Statement}
\label{sec:problem_statement}

We consider a meta-learning setup, where the task at test time is to efficiently maximize a function $f_\test \colon D \to \mathbb{R}$ with $f_\test$ sampled from some unknown distribution. To maximize $f_\test$, we sequentially evaluate parameters $\x_n$ to obtain noisy observations $y_n = f_\test(\x_n) + \omega_n$, where $ \omega_n \sim \mathcal{N}(0, \sigma_\test^2)$ is i.i.d.\ zero-mean Gaussian noise. To this end, we use the $N_\test$ previous observations $\mathcal{D}_\test = \{ \x_n, y_n \}_{n=1}^{N_\test}$ to build a probabilistic model for $f_\test$. Specifically, we focus on GPs \citep{rasmussengp}, $f_\test \sim \mathcal{GP}(m, k)$, which model function values via a joint Gaussian distribution that is parametrized through a prior mean function $m(\cdot)$ and a kernel $k(\cdot, \cdot)$. Conditioned on the data $\mathcal{D}_\test$, the posterior is another GP with mean and covariance at a query parameter $\x$ given by
\begin{equation}
\begin{aligned}
\mu_\test(\x) 
    &= m(\x) + k(\x, \X_\test) 
    \\
    &\times \left(k(\X_\test, \X_\test) + \sigma_\test^2 \mathbf{I}\right)^{-1} 
    \left( \mathbf{y}_\test - m(\x) \right), \\
\Sigma_\test(\x, \x') 
	&=  k(\x, \x') - k(\x, \X_\test ) 
    \\
    &\times \left(k(\X_\test, \X_\test) +\sigma_\test^2\mathbf{I}\right)^{-1} k(\X_\test, \x'),
\end{aligned}
\label{eq:test_posterior}
\end{equation}
where $\X_\test = (\x_1, \dots, \x_{N_\test})$ and $\mathbf{y}_\test = (y_1, \dots, y_{N_\test})$ is the vector of noisy observations. To improve the model we assume access to meta-data from $\nummeta$ related tasks $\meta \in \metaset = \{1, \dots, \nummeta\}$ that come from the same distribution of tasks. For each of those we have access to a dataset $\mathcal{D}_\meta = \{ \x_{\meta, n}, y_{\meta, n} \}_{n=1}^{N_\meta}$ that is based on $N_\meta$ noisy observations $y_{\meta, n} = f_\meta(\x_{\meta, n}) + \omega_{\meta, n}$ of the corresponding task corrupted by different noise levels $\omega_{\meta, n} \sim \mathcal{N}(0, \sigma_\meta^2)$. Together, these datasets form the meta-data $\metadata = \cup_{\meta \in \metaset} \mathcal{D}_\meta$.
This meta-data can be incorporated in the GP by considering a \emph{joint} model over the meta- and test-tasks. Multi-task GP (\multitaskGP) models are defined through an extended kernel that additionally models similarities between tasks,
\begin{equation}
\label{eq:multi_task_kernel}
    k(( \x, \nu), ( \x', \nu')) = \sum_{\meta \in \metaset \cup \{ \test \} } [\mathbf{W}_\meta]_{(\nu, \nu')} \, k_\meta( \x, \x') ,
\end{equation}
where $k_\meta$ are arbitrary kernel functions and $\mathbf{W}_\meta$ are positive semi-definite matrices called \emph{coregionalization matrices} whose entries $[\mathbf{W}_\meta]_{(\nu, \nu')}$ model the covariance between two tasks $\nu$ and $\nu'$ \citep{alvarez2012kernels}. By conditioning this joint model on both $\metadata$ and $\mathcal{D}_\test$ we obtain a tighter posterior on $f_\test$ through the typical GP equations in \cref{eq:test_posterior}. However, these models are computationally expensive with a complexity that scales cubically in the number of all data points, and are difficult to train in practice due to the large number of HPs, which scale at best quadratic in the number of tasks~\citep{bonilla2008multitask,tighineanu2022transfer}. For notational convenience, we index the test-task as the $\nummeta+1$th task, $\test = \nummeta + 1$, in the following.

Given a prior over the function $f_\test$, BO methods use the posterior in \cref{eq:test_posterior} to sequentially query a new parameter $\x_{N_\test + 1}$ that is informative about the optimum of $f_\test$ by solving an auxiliary optimization problem based on an acquisition function~$\alpha$,
\begin{equation}
    \x_{N_\test + 1} = \argmax_{\x \in D} \alpha( f_\test \mid \x, \mathcal{D}_\test, \metadata).
    \label{eq:bo}
\end{equation}
Several choices for $\alpha$ exist in the literature \citep{jones1998efficient,srinivas10gaussian}. The performance of these methods commonly depend on the quality of the test-prior, which is the focus of this paper.

\section{Scalable Multi-Task Kernel}
\label{sec:our_approach}

In this section, we present \ourmethod, a modular meta-learning model that is efficient to train and evaluate. We introduce two assumptions on the \multitaskGP model in \cref{eq:multi_task_kernel} that restrict learning only to the most relevant covariances and lead to a modular GP posterior that can be evaluated efficiently. The first assumption neglects the correlations between meta-tasks. The number of these correlations scales quadratically with the number of meta-tasks and learning them is challenging in the regime of many meta-tasks with scarce data. For convenience, we write $\cov{f_\meta, f_{\meta'}} = c$ for some $c \geq 0$ to focus on covariance between meta-tasks, instead of the more explicit $\cov{f_\meta(\x), f_{\meta'}(\x')} = c\, k_\meta(\x, \x')$.

\begin{assumption}
    ${ \cov{f_{1:\nummeta}, f_{1:\nummeta}} = \mathbf{I} }$. 
    \label{as:independent_sources}
\end{assumption}

While \cref{as:independent_sources} is usually violated in practice, this does not prohibit learning as long the amount of data per meta-task is sufficient to allow for a probabilistic description of the meta-task function. This is usually the case in many real-world applications. Note that meta-tasks are uncorrelated only in their prior distribution: since each meta-task can affect the test-task, conditioning on the test-data $\mathcal{D}_\test$ induces correlations between meta-tasks via the explaining-away effect. The assumption that $\cov{f_\meta, f_\meta} = 1$ ensures that each kernel $k_\meta$ in \cref{eq:multi_task_kernel} models the marginal distribution of the corresponding meta-task $f_\meta$. Together, these two properties are critical to make our model efficient to learn for large number of meta-tasks. 

With the second assumption we restrict the test-task model to be additive in the marginal models of each meta-task. While additive models have been considered before, e.g., by \citet{duvenaud2011additive,Marco17VirtualvsReal}, these have been framed as $f_\test$ being a direct sum of meta-task models. In contrast, we require the weaker assumption that $f_\test$ is additive in functions that (anti-)correlate perfectly with the meta-task models. As for the covariance, we introduce the short-hand notation $\corr{f_\meta, f_{\meta'}} = \corr{f_\meta(\x), f_{\meta'}(\x)}$.

\begin{assumption}
The test-task model can be written as ${ f_\test = \tilde{f}_\test + \sum_{\meta \in \metaset} \tilde{f}_\meta }$, with ${ \big| \mathrm{Corr} \big( \tilde{f}_\meta, f_\meta \big) \big|= 1 }$, $\mathrm{Cov} \big(\tilde f_\test, f_\test\big) = 1$ and $\mathrm{Cov} \big( \tilde{f}_\test, f_\meta \big) = 0$ for all $\meta \in \metaset$.
    \label{as:strong_correlation}
\end{assumption}
By restraining the components of the test-task $\tilde{f}_\meta$ and meta-task $f_\meta$ models to correlate perfectly, we directly model the intuition that parts of the meta-task functions should be reflected in the test-task. Only the scale of these functions remains a free parameter that is learned. 
The residual model $\tilde{f}_\test$ is independent of the meta-tasks and models parts of the test-task that cannot be explained by the meta-task models. Together, \cref{as:independent_sources,as:strong_correlation} enforce structure on the coregionalization matrices in \cref{eq:multi_task_kernel}. \Cref{as:independent_sources} enforces $[\mathbf{W}_\meta]_{(\meta, \meta')}$ to be zero when $\meta \neq \meta'$, while \cref{as:strong_correlation} enforces $[\mathbf{W}_\meta]_{(\meta, \meta)} = 1$, so that the variation of each meta-task is modeled directly by the corresponding kernel $k_\meta$. The assumption about the correlation additionally leads to matrices $\mathbf{W}_\meta$ that are parametrized by an unconstrained scalar parameter $w_\meta \in\mathbb R$ for each meta-task $\meta \in \metaset$. Concretely, the matrices have all zero entries except
\begin{equation}
    \begin{aligned}
    [\mathbf W_\test]_{(\test, \test)} &= [\mathbf W_\meta]_{(\meta, \meta)} = 1, \\
    [\mathbf W_\meta]_{(\meta, \test)} &= [\mathbf W_\meta]_{(\test, \meta)} = w_\meta, \\
     [\mathbf W_\meta]_{(\test, \test)} &= w_\meta^2.
    \end{aligned}
\label{eq:correg_matrices} 
\end{equation}
As an example, for one meta-task $\nummeta = 1$ we have
\begin{equation*}
    \mathbf{W}_1 = \begin{bmatrix}
    1 & w_1 \\
    w_1 & w_1^2
    \end{bmatrix},
    \quad
    \mathbf{W}_\test = \begin{bmatrix}
    0 & 0 \\
    0 & 1
    \end{bmatrix}.
\end{equation*}
It is easy to verify that both matrices are positive semi-definite (see \cref{ap:kernel_properties}) and that we have $ | \mathrm{Corr} ( \tilde{f}_1, f_1 ) | = | w_1  / \sqrt{1^2 \times w_1^2 } | = 1 $. Thus, while we constrain the meta- and test functions to be perfectly correlated, the magnitude of $w_\meta$ determines to what extent the meta-task is relevant for the test-task: 
The prior for $f_\test$ is $w_1^2 k_\meta(\cdot, \cdot) + k_\test(\cdot, \cdot)$ and in the limit ${ w_1 \to 0 }$ they are modeled as being independent. The same reasoning holds for multiple tasks.

\begin{restatable}{lemma}{jointkernel}
\label{lem:valid_joint_kernel}
\cref{as:independent_sources,as:strong_correlation} with $w_\meta \in \mathbb{R}$ for $\meta \in \metaset$ yield a valid multi-task kernel given by
\begin{equation}
\begin{aligned}
k_\method^\mathrm{joint}((\x, \nu), (\x', \nu')) &= 
  \delta_{\nu=t} \delta_{\nu'=t} k_\test(\x, \x') \\
  &+ \sum_{\meta \in \metaset} g_\meta(\nu) g_\meta(\nu') k_\meta(\x, \x'),
\end{aligned}
\label{eq:our_joint_kernel}
\end{equation}
where $g_\meta(\nu)$ is equal to $w_\meta$ if $\nu = \test$, one if $\nu = \meta$, and zero otherwise. 
$\delta_{i=j}$ is the Dirac-delta.
\end{restatable}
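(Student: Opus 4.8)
The statement bundles two claims: that \cref{as:independent_sources,as:strong_correlation} \emph{force} the multi-task kernel \cref{eq:multi_task_kernel} into the shape \cref{eq:our_joint_kernel}, and that this shape is a \emph{valid} (positive semi-definite) kernel. The plan is to handle both at once through an explicit generative realization of the two assumptions, from which the covariance function can simply be read off; the purely algebraic shortcut---substituting the coregionalization matrices \cref{eq:correg_matrices} into \cref{eq:multi_task_kernel} and collecting terms---then appears as a special case.

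First I would set up the generative model dictated by the assumptions: take mutually independent zero-mean GPs $f_\meta \sim \GP{0}{k_\meta}$ for $\meta \in \metaset$ and $\tilde f_\test \sim \GP{0}{k_\test}$. \Cref{as:independent_sources} says precisely that the meta-task components are independent with $f_\meta$ having covariance function $k_\meta$, and the covariance normalizations in \cref{as:strong_correlation} fix the scale of $\tilde f_\test$ so that it too has covariance function $k_\test$. For the decomposition $f_\test = \tilde f_\test + \sum_{\meta \in \metaset} \tilde f_\meta$, the constraint $|\mathrm{Corr}(\tilde f_\meta, f_\meta)| = 1$ together with joint Gaussianity and the common kernel $k_\meta$ forces $\tilde f_\meta = w_\meta f_\meta$ for a single real scalar $w_\meta$ (the sign ambiguity of perfect correlation being absorbed into $w_\meta \in \mathbb{R}$), while $\mathrm{Cov}(\tilde f_\test, f_\meta) = 0$ plus joint Gaussianity makes $\tilde f_\test$ independent of every $f_\meta$. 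Hence $f_\test = \tilde f_\test + \sum_\meta w_\meta f_\meta$, which recovers the coregionalization matrices of \cref{eq:correg_matrices}: with $g_\meta$ as in the lemma statement one checks directly that $[\mathbf{W}_\meta]_{(\nu,\nu')} = g_\meta(\nu) g_\meta(\nu')$ (so $\mathbf{W}_\meta$ is the rank-one matrix $v_\meta v_\meta^\top$, with $v_\meta$ supported on coordinates $\meta$ and $\test$ with values $1$ and $w_\meta$) and $[\mathbf{W}_\test]_{(\nu,\nu')} = \delta_{\nu=t}\delta_{\nu'=t}$.

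What remains is bookkeeping. Substituting these entries into \cref{eq:multi_task_kernel} gives $\sum_{\meta \in \metaset \cup \{\test\}} [\mathbf{W}_\meta]_{(\nu,\nu')} k_\meta(\x,\x') = \delta_{\nu=t}\delta_{\nu'=t} k_\test(\x,\x') + \sum_{\meta \in \metaset} g_\meta(\nu) g_\meta(\nu') k_\meta(\x,\x')$, which is exactly \cref{eq:our_joint_kernel}; equivalently, computing $\mathrm{Cov}(f_\nu(\x), f_{\nu'}(\x'))$ from the generative model yields $\delta_{\nu=\nu'} k_\nu(\x,\x')$ for $\nu,\nu'\in\metaset$, $w_\nu k_\nu(\x,\x')$ for $\nu\in\metaset$ and $\nu'=\test$, and $k_\test(\x,\x') + \sum_\meta w_\meta^2 k_\meta(\x,\x')$ for $\nu=\nu'=\test$, matching $g_\meta(\nu) g_\meta(\nu') \in \{1, w_\meta, w_\meta^2\}$ case by case. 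Validity then comes for free: $k_\method^\mathrm{joint}$ is the covariance function of a genuine GP (a fixed linear combination of independent GPs), hence positive semi-definite; equivalently, it is a finite sum of valid kernels, each a product of a rank-one positive semi-definite task kernel---$g_\meta(\nu) g_\meta(\nu')$ or $\delta_{\nu=t}\delta_{\nu'=t}$---with a valid input kernel, as also verified in \cref{ap:kernel_properties}. I expect the only genuinely delicate point to be the argument that $|\mathrm{Corr}(\tilde f_\meta, f_\meta)| = 1$ at \emph{every} pair of inputs forces the \emph{input-independent} relation $\tilde f_\meta = w_\meta f_\meta$ rather than a pointwise-varying coupling---this is where joint Gaussianity and the shared kernel $k_\meta$ do the work---together with the minor normalization bookkeeping relating the scale conventions on $k_\meta$ and $k_\test$ to the entries of the $\mathbf{W}$'s; the rest is mechanical.
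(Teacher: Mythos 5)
Your proposal is correct and follows essentially the same route as the paper's proof: the same key deduction that $\left|\mathrm{Corr}\big(\tilde f_\meta, f_\meta\big)\right| = 1$ forces $\tilde f_\meta = w_\meta f_\meta$ (up to an irrelevant constant), the same case-by-case covariance computation via bilinearity and \cref{as:independent_sources}, and positive semi-definiteness via the rank-one structure of the coregionalization matrices (the paper's \cref{ap:kernel_properties} together with the argument of \citet{alvarez2012kernels}). Recasting the derivation as an explicit generative construction $f_\test = \tilde f_\test + \sum_{\meta \in \metaset} w_\meta f_\meta$ is only a cosmetic repackaging of the paper's direct covariance algebra, and the delicate point you flag---that perfect pointwise correlation yields an input-independent scalar $w_\meta$---is asserted without further elaboration in the paper's proof as well.
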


Based on \cref{lem:valid_joint_kernel}, we have a valid joint kernel over meta- and test-tasks that is parameterized by the scalars $w_\meta \in \mathbb{R}$. This successfully limits the number of parameters to scale linearly in the number of meta-tasks. The scalable and modular nature of \ourmethod is revealed by conditioning \cref{eq:our_joint_kernel} on the meta-data.

\begin{restatable}{theorem}{testprior}
    \label{thm:test_prior}
    Under a zero-mean GP prior with multi-task kernel given by \cref{eq:our_joint_kernel}, the test-task distribution conditioned on the meta-data is given by ${ f_\test\mid \metadata \sim \mathcal{GP}( m_\method, \Sigma_\method ) }$ with
	\begin{equation}
    \begin{aligned}
	   m_\method(\x) &= \sum_{\meta \in \metaset} w_\meta \mu_\meta(\x), \\
	   k_\method(\x, \x') &= k_\test(\x, \x') + \sum_{\meta \in \metaset} w_\meta^2 \Sigma_\meta(\x, \x'),
    \end{aligned}
	\label{eq:test_prior}
	\end{equation}
	where $\mu_\meta(\x)$ and $\Sigma_\meta(\x, \x')$ are the per-task posterior mean and covariance conditioned on $\mathcal{D}_\meta$.
\end{restatable}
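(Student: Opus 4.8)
The plan is to realize the joint \multitaskGP as an explicit sum of independent latent GPs --- one per meta-task plus a residual --- and to exploit this structure when conditioning on $\metadata$. Concretely, I would introduce mutually independent zero-mean GPs $h_0 \sim \GP{0}{k_\test}$ and $h_\meta \sim \GP{0}{k_\meta}$ for $\meta \in \metaset$, and define a process $F$ on the augmented domain $D \times (\metaset \cup \{\test\})$ by $F(\x, \meta) = h_\meta(\x)$ for $\meta \in \metaset$ and $F(\x, \test) = h_0(\x) + \sum_{\meta \in \metaset} w_\meta h_\meta(\x)$. A short covariance computation, using independence of the $h_\meta$, shows that $F$ is a zero-mean GP whose kernel equals $k_\method^\mathrm{joint}$ from \cref{eq:our_joint_kernel}. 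Since a GP is determined by its mean and kernel, $F$ has the same distribution as the joint model in the statement; in particular $F(\cdot, \test)$ is distributed as $f_\test$ and $F(\cdot, \meta)$ as $f_\meta$.

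Next I would observe that the meta-data decouples across the latent processes: under this representation each observation $y_{\meta, n} = f_\meta(\x_{\meta, n}) + \omega_{\meta, n}$ is a noisy evaluation of $h_\meta$ alone, so $\mathcal{D}_\meta$ depends only on $h_\meta$ and is independent of $h_0$ and of $h_{\meta'}$ for $\meta' \neq \meta$. Because $(h_0, h_1, \dots, h_\nummeta)$ are jointly Gaussian and a priori independent, conditioning the whole collection on $\metadata$ keeps them independent and affects $h_\meta$ only through $\mathcal{D}_\meta$. Hence $h_0 \mid \metadata \sim \GP{0}{k_\test}$ is unchanged, while $h_\meta \mid \metadata \sim \GP{\mu_\meta}{\Sigma_\meta}$ by the standard single-task GP posterior formula \cref{eq:test_posterior} applied to $\mathcal{D}_\meta$, and these posteriors remain mutually independent.

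Finally, I would assemble the test-task posterior. Conditioned on $\metadata$, $f_\test = F(\cdot, \test) = h_0 + \sum_{\meta \in \metaset} w_\meta h_\meta$ is a sum of independent GPs, so it is again a GP whose mean is the sum of the means and whose covariance is the sum of the covariances, with each $w_\meta h_\meta$ contributing mean $w_\meta \mu_\meta$ and covariance $w_\meta^2 \Sigma_\meta$. This gives $m_\method(\x) = \sum_{\meta \in \metaset} w_\meta \mu_\meta(\x)$ and $k_\method(\x, \x') = k_\test(\x, \x') + \sum_{\meta \in \metaset} w_\meta^2 \Sigma_\meta(\x, \x')$. An equivalent route avoiding latent variables is to plug $k_\method^\mathrm{joint}$ directly into \cref{eq:test_posterior} with $\metadata$ as conditioning data: the cross-covariance between a test query at task $\test$ and a meta-point at task $\meta$ is $w_\meta k_\meta(\x, \x_{\meta, n})$, while the meta--meta Gram matrix plus observation noise is block-diagonal with blocks $k_\meta(\X_\meta, \X_\meta) + \sigma_\meta^2 \mathbf{I}$; inverting it block-wise reproduces the same sums.

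The main obstacle is the conditioning step: one must argue carefully that conditioning the latent processes jointly on $\metadata$ preserves their independence and reduces, for each $h_\meta$, to conditioning on $\mathcal{D}_\meta$ only --- i.e., that the explaining-away coupling between meta-tasks noted after \cref{as:independent_sources} is triggered by test-data, not meta-data. Equivalently, in the direct computation one must verify the block-diagonal structure of the meta--meta covariance, which is exactly where \cref{as:independent_sources} (vanishing off-diagonal coregionalization entries) is used. A minor additional point is to note that the latent representation is merely one coupling realizing the target covariance, which is harmless since conditional means and covariances of jointly Gaussian variables depend only on the joint second-order structure.
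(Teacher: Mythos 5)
Your proposal is correct, and its primary route differs from the paper's. The paper proves \cref{thm:test_prior} by direct computation: it writes the joint Gaussian of $(\mathbf{y}_{1:\nummeta}, f_\test(\x))$ under the kernel in \cref{eq:our_joint_kernel}, notes that \cref{as:independent_sources} makes the meta--meta Gram matrix block-diagonal with blocks $k_\meta(\X_\meta,\X_\meta)+\sigma_\meta^2\mathbf{I}$, inverts it block-wise, and reads off the sums $\sum_\meta w_\meta \mu_\meta$ and $k_\test + \sum_\meta w_\meta^2 \Sigma_\meta$ from the standard conditioning formulas --- exactly the ``equivalent route avoiding latent variables'' you sketch at the end. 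Your main argument instead constructs an explicit latent realization $F(\x,\meta)=h_\meta(\x)$, $F(\x,\test)=h_0(\x)+\sum_{\meta\in\metaset} w_\meta h_\meta(\x)$ with independent $h_0\sim\GP{0}{k_\test}$, $h_\meta\sim\GP{0}{k_\meta}$, verifies it reproduces $k_\method^\mathrm{joint}$, and then uses the fact that each $\mathcal{D}_\meta$ is a function of $(h_\meta,\omega_\meta)$ alone, so conditioning on $\metadata$ preserves the independence of the latents and reduces to per-task conditioning; summing the independent posteriors gives \cref{eq:test_prior}. Both arguments are sound and rest on the same structural facts (block-diagonality of the meta covariance is the matrix-level shadow of your latent independence); your construction buys a more conceptual, matrix-free explanation of why the model is modular --- it makes explicit that explaining-away between meta-tasks is only induced by test-data, as the paper remarks informally --- at the cost of the extra (correctly flagged and correctly discharged) step that a particular coupling realizing the covariance suffices because Gaussian conditionals depend only on the joint second-order structure, whereas the paper's computation is more mechanical but immediately mirrors the implementation.
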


\begin{figure*}[t]
\includegraphics[width=\textwidth]{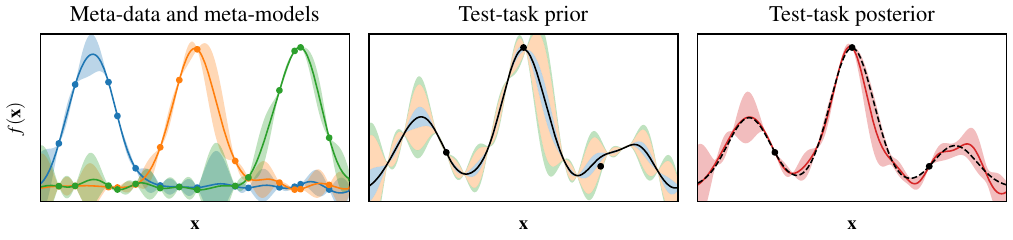}
\caption{Illustration of \ourmethod. We train individual GPs per meta-task (left), which combine with the task kernel $k_\test$ and the task-weights $w_\meta$ to form the test-task prior. The HPs are inferred via MAP-inference to obtain the figure in the middle, where the shaded colors correspond to the meta-tasks contribution to the standard deviation. The posterior can then be obtained by conditioning the test-task prior distribution on $\mathcal{D}_\test$ via \cref{eq:test_posterior} (right). Notice the decreased uncertainty estimate of the test posterior at the location of the meta-data.}
\label{fig:method_illustration}
\end{figure*}

Following \cref{thm:test_prior} we can model each meta-task $\meta$ with an individual GP based on a kernel $k_\meta$. The resulting prior distribution of the test-task $f_\test$ is a GP given by the weighted sum of meta-task posteriors according to \cref{eq:test_prior}. 
\ourmethod thus models a full joint distribution over tasks and yields a prior on the test function that can be conditioned on $\mathcal{D}_\test$ through \cref{eq:test_posterior} in order to obtain the test-task posterior 
\begin{equation}
p(f_\test \mid \x, \mathcal{D}_\meta, \mathcal{D}_\test) = \mathcal{N}\left( \mu_\test(x), \Sigma_\test(\x,\x)\right)
\label{eq:test_posterior}
\end{equation}
based on the prior mean and kernel from \cref{thm:test_prior}. Next to enabling a full Bayesian treatment of uncertainty, this also allows us to determine the meta-task weights $w_\meta$ by maximizing the likelihood. In light of \cref{thm:test_prior}, \ourmethod reduces the complexity of the original \multitaskGP from cubic in the total number of points to \emph{linear} in the number of tasks. We achieve this solely by enforcing \cref{as:independent_sources,as:strong_correlation} and without introducing numerical approximations. We illustrate the inner workings of \ourmethod in \cref{fig:method_illustration}.

\paragraph{Likelihood optimization}
So far we have assumed the HPs $\theta_\meta$ of the meta-task kernels $k_\meta$, and the test-task HPs $\theta_\test$, which contain the parameters of $k_\test$ and the weights $w_\meta$, to be given. In practice, they are inferred from the data $\metadata$ and $\mathcal{D}_\test$. Naive evaluation of the likelihood of the joint task model in \cref{eq:our_joint_kernel} is expensive, $O((N_\test + \nummeta \overline{N}_\meta)^3)$ with $\overline{N}_\meta = \max_{\meta \in \metaset} N_\meta$, since it depends on all data. However, any model that complies with \cref{as:independent_sources} is scalable in $\nummeta$ since
\begin{align}
    \label{eq:full_likelihood_decomposition}
    &\log p\left( \mathbf{y}_\test, \mathbf{y}_{1:\nummeta} \mid \X_\test, \X_{1:\nummeta}, \theta_\test, \theta_{1:\nummeta} \right) 
    = \\
    &\log p\left( \mathbf{y}_\test \mid \metadata, \X_\test, \theta_\test, \theta_{1:\nummeta} \right) 
    + \hspace{-0.5em} 
    \sum_{\meta \in \metaset}
    \hspace{-0.4em}
    \log p\left( \mathbf{y}_\meta \mid \X_\meta, \theta_\meta \right) \notag
    .
\end{align}
The second term is the per-meta-task likelihood that can be computed at cost $O( \nummeta N_\meta^3 )$, while the first is the likelihood under the test-task prior given by \cref{thm:test_prior}. Given the already inverted meta-task kernel matrices, computing the posterior meta-task covariances at test-task points $\X_\test$ is of complexity $O( \nummeta ( N_\test^2 \overline{N}_\meta + N_\test \overline{N}_\meta^2))$. Together with the resulting test-task likelihood, $O(N_\test^3)$, this yields a total complexity of $O( \nummeta ( \overline{N}_\meta^3 + N_\test^2 \overline{N}_\meta + N_\test \overline{N}_\meta^2) + N_\test^3 )$, which is linear in the number of meta-tasks $\nummeta$ and thus enables scalable optimization.

In practice, the number of test parameters in $\X_\test$ is usually smaller than the available meta-data since the meta-prior already contains significant information. This leads to a weak dependence between the meta-model parameters $\theta_\meta$ and the test data $\mathbf{y}_\test$. This is especially true for \ourmethod, since the marginal per-task model only depends on $k_\meta$ and is thus independent of $\theta_\test$. We therefore suggest to modularize $\ourmethod$ by assuming conditional independence between $\theta_\meta$ and $\mathcal{D}_\test$ \citep{bayarri2009modularization}:

\begin{assumption}
    For all meta-tasks ${ \meta \in \metaset }$, we have ${ p\left( \theta_\meta \mid \mathcal{D}_\meta, \mathcal{D}_\test \right) = p\left( \theta_\meta \mid \mathcal{D}_\meta \right) }$.
    \label{as:likelihood_modularization}
\end{assumption}

\cref{as:likelihood_modularization} allows us to infer the meta-task HPs $\theta_{1:\nummeta}$ independently of the test-task HPs $\theta_\test$. Thus, we can optimize the meta-task GPs in parallel based only on their individual data, $\theta_\meta^\star = \argmax_{\theta_\meta} \log p(\mathbf{y}_\meta \mid \X_\meta, \theta_\meta)$. Afterwards, we compute and cache the meta-task GP posterior mean $\mu_\meta(\X_\test)$ and covariance matrix $\Sigma_\meta(\X_\test, \X_\test)$. Since the test-task prior in \cref{eq:test_prior} depends on these quantities and not  on $\theta_\test$ due to \cref{as:likelihood_modularization}, we can optimize the test-task likelihood,
\begin{equation}
    \theta_\test^\star = \argmax_{\theta_\test} \log p\left( \mathbf{y}_\test \mid \metadata, \X_\test, \theta_\test, \theta_{1:\nummeta}^\star \right),
    \label{eq:test_likelihood}
\end{equation}
at cost of only $O(\nummeta N_\test^2 + N_\test^3)$, which is cheap to evaluate. 
Together, this enables scalable meta-learning with Gaussian processes (\ourmethod) and we use this simplification in our experiments. We summarize the algorithm in \cref{alg:overview}.

\begin{algorithm}[tb]
   \caption{\ourmethod}
   \label{alg:overview}
\begin{algorithmic}[1]
   \STATE {\bfseries Input:} meta-data $\metadata = \cup_{\meta \in \metaset} \mathcal{D}_\meta$ \label{alg:overview:input}
   \STATE Train individual GP models per meta-task and optimize $\theta_\meta$ \label{alg:overview:meta-gps}
   \STATE Construct the test-task prior as in \cref{eq:test_prior}, and cache $\mu_\meta(\mathbf{X}_\test)$ and $\Sigma_\meta(\mathbf{X}_\test, \mathbf{X}_\test)$. \label{alg:overview:test-prior}
   \STATE Optimize the test-task HPs $\theta_\test$ as in \cref{eq:test_likelihood} \label{alg:overview:test-likelihood}
   \STATE Condition the prior on $\mathcal{D}_\test$ as in \cref{eq:test_posterior} to obtain the posterior distribution for $f_\test$ \label{alg:overview:test-posterior}
\end{algorithmic}
\end{algorithm}

\paragraph{Discussion and limitations}
\cref{thm:test_prior} provides a scalable and structured way to distill meta-information into a prior for a test-task GP. The key component for this is \cref{as:strong_correlation}, which assumes an additive model. While these models reflect many real-world situations, more flexible meta-learning models based on neural networks are in principle able to learn more complex relationships between the meta- and test-tasks. However, by relaxing the model assumptions these methods also require significantly more data. Thus \ourmethod is most suited when the amount of data per task is relatively scarce. While the overall approach scales linearly with the number of meta-tasks and enables parallel optimization, each model is still a standard Gaussian process, which scales cubically in the number of points per task. For large number of data points per task, $N_\meta$ or $N_\test$, the modular nature of \ourmethod allows for full scalability in the number of points by employing scalable approximations for the task GPs \citep{lazaro2010sparse}. While an interesting direction, we focus on closed-form inference here and leave the exploration of full scalability for future work.

\section{Experiments}
\label{sec:experiments}

\begin{figure*}[t]
    \centering
    \includegraphics[width=\textwidth]{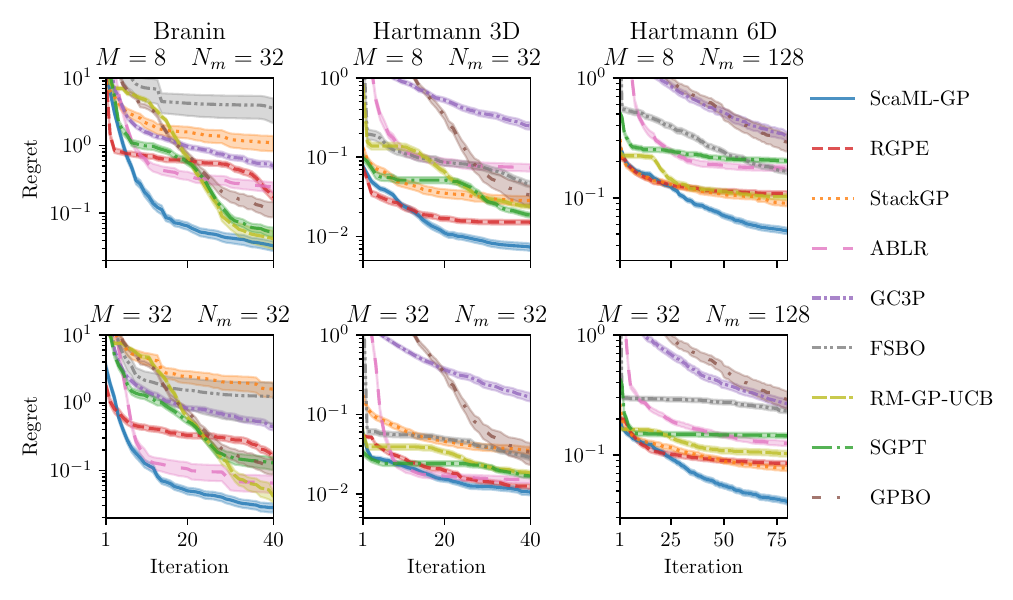}
    \caption{
        Experimental results on the synthetic benchmarks for two different meta-data configurations: $\nummeta = 8$ and $32$ meta-tasks in the top and bottom row, respectively. The meta-data are sampled uniformly at random and contain $N_\meta=32$ points per meta-task for Branin and Hartmann 3D, and 128 for Hartmann 6D, respectively. \ourmethod consistently achieves the lowest simple regret across all tasks. We provide a more detailed analysis on the effect of $M$ and $N_\meta$ on the performance in \cref{fig:ablation_summary}.
        }
    \label{fig:synthetic}
\end{figure*}

We evaluate our method (\ourmethod) on several optimization problems against the following baselines: standard GP-based BO (GPBO) without any meta-data, \RGPE \citep{feurer2018rgpe} and \SGPT~\citep{wistuba2018scalable}, both of which model the test function as a weighted sum of the predictions of all task GPs, the method by \citet{golovin2017stackgp}, which we dub \StackGP, since it trains a vertical stack of GPs and the posterior mean function of each GP is used as the prior mean function of the next GP, \ABLR~\citep{ABLR}, where Bayesian linear regression is performed on the deep features learned from the meta-data, \RMBO~\citep{Dai2022provably}, where the acquisition function is a weighted sum of individual acquisition functions of each model (see \cref{ap:implementation_details_rmbo} for implementation details), FSBO \citep{Wistuba2021few}, which is an extension of MAML to GPs, and GC3P \citep{salinas2020quantile} in which a Gaussian Copula regression
with a parametric prior are used to scale to large
data. We implement the models and run BO using BoTorch \citep{balandat2020botorch} and provide details on the experimental setup in \cref{ap:experiment_details}. 

\subsection{Synthetic Benchmarks}
\label{sec:synthetic_benchmarks}

Our synthetic-benchmark consists of conventional benchmark functions, where we place priors on some of their parameters. We consider the two-dimensional Branin, three-dimensional Hartmann 3D, and six-dimensional Hartmann 6D and provide details on the function families and the priors in \cref{ap:synthetic_benchmarks}. Note that these are generic meta-learning benchmarks that are not designed to fulfill \cref{as:independent_sources,as:strong_correlation,as:likelihood_modularization}.
We evaluate performance based on the simple regret ${ r = \max_{\x \in D} f_\test(x) - \max_{n \leq N_\test } f_\test(\x_n) \geq 0}$, which is the difference between the true optimum and the best function value obtained during the current optimization run. For each benchmark, we conduct 128 independent runs and report mean and standard error in our figures. For each run, the meta-data is consistent across all baselines for comparability.

The performance of the different baselines on the synthetic benchmarks for $M=8$ (top row) and $M=32$ (bottom row) meta-tasks is visualized in \cref{fig:synthetic}. The regret of GPBO converges in about 40 iterations for the two- and three-dimensional benchmarks and needs more than 80 iterations for Hartmann 6D. As expected, most meta-learning baselines converge faster than GPBO, since they can leverage additional information. In general, different baselines excel in different data-regimes. For instance, \RGPE is among the best for the Hartmann families, while \ABLR provides competitive performance on Branin after about ten iterations. 
In contrast, \ourmethod, consistently demonstrates fast task adaptation and achieves the lowest regret at the end of each experiment compared to all baseline methods. This demonstrates that there is an advantage to employing a joint Bayesian model across tasks. For \ABLR, we can see that performance improves significantly as we increase the number of meta-tasks, and thus the overall amount of meta-data, in the bottom row. In contrast, \ourmethod performs consistently across both domains, which aligns with the findings of \citet{tighineanu2022transfer} that GP-based methods have an advantage when meta-data is scarce.

\begin{figure*}[t]
    \centering
    \includegraphics[width=\textwidth]{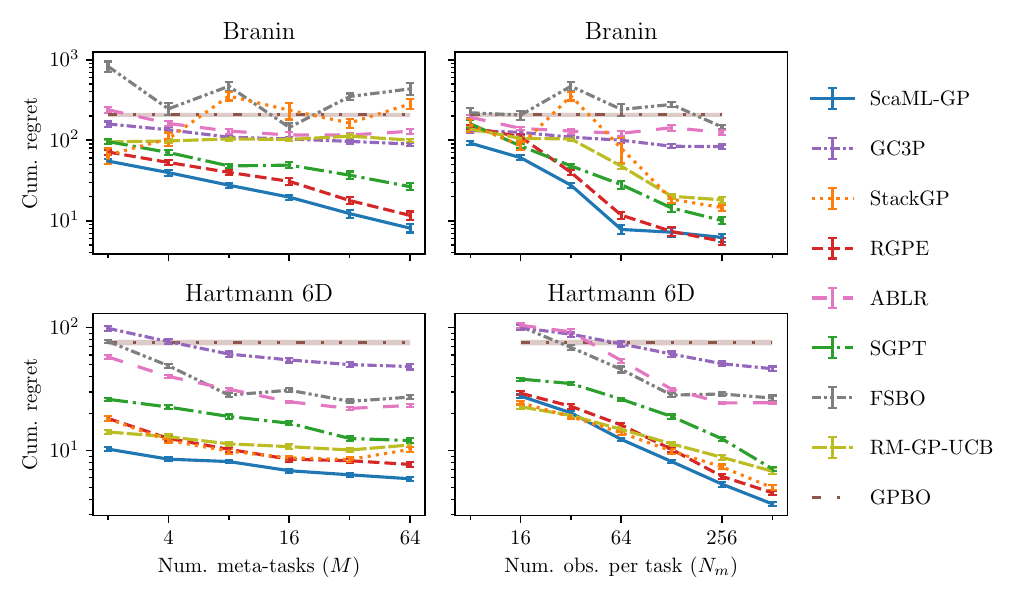}
    \caption{
        Summary of the performance of the meta-learning baselines on two synthetic benchmarks, Branin (top) and Hartmann 6D (bottom), as a function of the number of meta-tasks $\nummeta$ for a fixed number of points per task ($N_\meta=32$ and 128 for Branin and Hartmann 6D, respectively) on the left side, and as a function of the number of points per task $N_\meta$ for a fixed number of $\nummeta=8$ meta-tasks on the right side. Each data point denotes the mean and standard error of the cumulative regret at the end of the optimization. 
        \ourmethod consistently achieves the lowest cumulative regret across all data domains, which indicates that it is well suited for the low-data domain, but can also effectively scale to many tasks.
        }
    \label{fig:ablation_summary}
\end{figure*}

\paragraph{Ablation on meta-data}
We now study in detail how the performance of all methods depends on the amount of meta-data. In particular, we independently vary the amount of meta-data per task and the number of meta-tasks on the Branin and Hartmann 6D tasks. As we increase the number of tasks we get better coverage of the task distribution, whereas increasing the number of observations per task allows us to learn more about each individual meta-task. We present a condensed version of this ablation study in \cref{fig:ablation_summary}, where we plot the cumulative regret at the end of the optimization. More information including the simple regret plots are available in \cref{ap:ablation}. As in \cref{fig:synthetic}, we observe that most meta-learning baselines outperform standard GPBO throughout the ablation range. Similarly, as we increase the amount of meta-data, meta-learning methods generally improve performance since they have more information about the task family. The exception to this rule is \StackGP, which unlike other methods does not scale to many tasks $\nummeta$ (top left), since it builds a hierarchical sequential model based on mean functions only and is thus not able to convey uncertainty information effectively. This effect is most pronounced for Branin, where the the optimum of different tasks varies continuously, while for Hartmann 6D they are restricted to four discrete locations. For Branin we can observe in the top-right figure that starting at about a hundred observations per task performance starts to saturate, since that is sufficient information to train confident meta-task models. Overall \ourmethod outperforms other baselines across all data domains. While other methods can match the performance in some settings, \ourmethod performs consistently well across all settings.

\subsection{HPO Benchmarks}
\label{sec:hpo_benchmarks}

We compare all methods on a set of tasks where we have to optimize the HPs of machine-learning models. Each benchmark considers a specific machine learning model and a task consists of adapting the model's HPs in order to minimize the expected loss on a given dataset across several random seeds. To facilitate fast exploration, we base our study on the tabular benchmarks available as part of the HPOBench library \citep{eggensperger2021hpobench}. For each task, it provides a lookup table that maps HP configurations to the corresponding loss. For BO, we restrict the parameter space of each benchmark to contain only the discrete values in the lookup table. We consider HP optimization tasks for a support-vector-machine (SVM) model, logistic regression (LR), XGBoost model (XGB), random forest (RF), and neural-network (MLP) model.
We randomly assign one of the tasks to be our test function for a single independent run, while the remaining tasks are used for meta-learning. We sub-sample the meta-data to $N_\meta=64$ points per task for the two-dimensional SVM and LR, and to $N_\meta=128$ points per task for the four-dimensional XGB and RF, and five-dimensional MLP benchmarks. In addition, we provide results on the tabular FC-Net benchmarks Slice Localization, Protein Structure, Naval Propulsion, and Parkinson's Telemonitoring \citep{eggensperger2021hpobench}. For each of the four, we use one as test task and the other three as meta-tasks. We report the mean simple regret together with the standard error of the mean computed over 256 independent runs. We provide further details about the HPO benchmarks in \cref{ap:bo_on_tabular_benchmarks}.

\begin{figure*}[t]
    \centering
    \includegraphics[width=\textwidth]{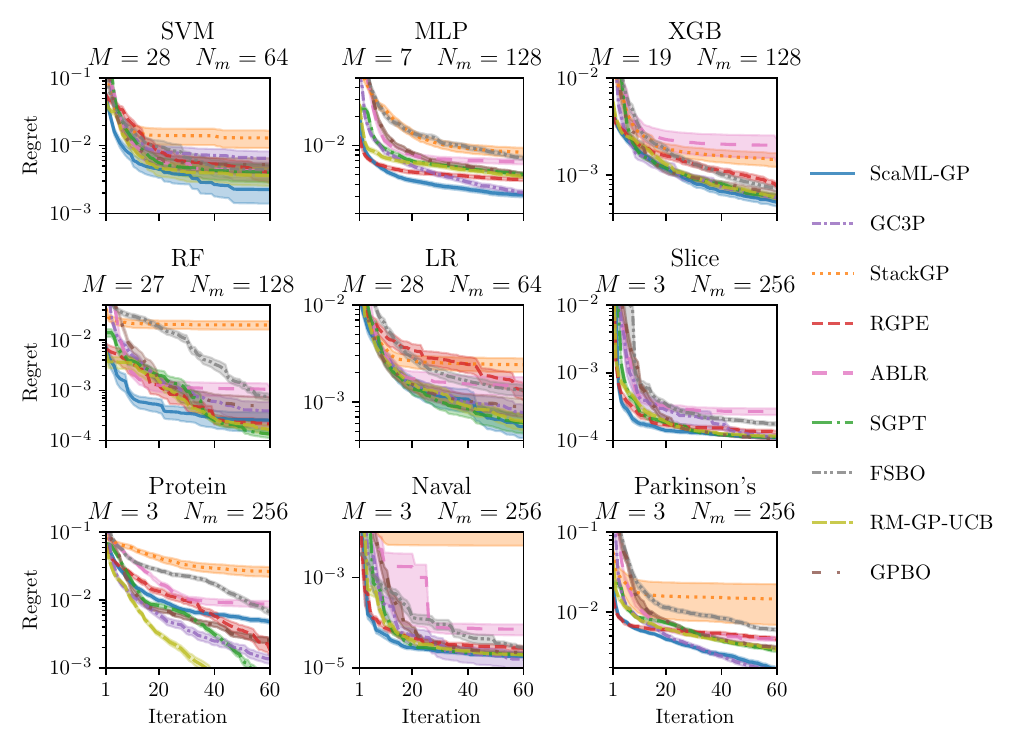}
    \caption{
        Experimental results on different hyperparameter optimization problems for machine learning models from five tabular HPO benchmarks (SVN, MLP, XGB, RF, and LR), as well four tabular FC-Net benchmarks (Slice, Protein, Naval, and Parkinson's). For each model, training runs on different datasets are used as meta-data with $N_\meta$ points per each of the $\nummeta$ meta-tasks to better inform hyperparameter choices on a new test-task dataset. \ourmethod is either the best or joint-best baseline on all benchmarks except the FC-Net Protein.
    }
    \label{fig:hpo}
\end{figure*}

In \cref{fig:hpo} we show a comparison of the methods on all nine HPO benchmarks.
Most meta-learning baselines successfully leverage the meta-data and achieve smaller regrets than GPBO early in the optimization. Towards the end of the optimization process, some baselines perform \emph{worse} than GPBO, which is most likely related to the inductive bias in the meta-data. Crucially, the threshold from better to worse varies among benchmarks, e.g., about ten iterations for XGB and thirty for RF. In practice, this threshold is not known to the user and makes it challenging to estimate the iteration budget or be confident in superior performance for a fixed budget. 
In contrast, \ourmethod is consistently competitive across all benchmarks in all optimization stages. This evidence is in line with the performance on the synthetic benchmarks in \cref{fig:synthetic}. The exception where ScaML-GP performs subpar is the FC-Net Protein benchmark. Our analysis indicates that this is because \cref{as:strong_correlation} is grossly violated such that no linear combination of the three meta-task models yields a useful test-task prior, see \cref{ap:validity_of_scamlgp_assumptions} for a detailed discussion.  We believe that the strong overall performance of \ourmethod is a result of the principled Bayesian approach together with assumptions that enable task scalability.

\section{Conclusion}

We have presented \ourmethod, a scalable GP for meta-learning. It is a specific instance of a multi-task GP model in \cref{eq:multi_task_kernel}, but based on explicit assumptions on the model structure in order to obtain a method that is scalable in the number of meta-tasks $\nummeta$. In particular, our test-prior is a weighted linear combination of the per-meta-task GP posterior distributions together with a residual model that accounts for test-functions that cannot be explained through the meta-data. In contrast to ensemble methods like \RGPE, this joint Bayesian model allows us to use maximum likelihood optimization in order to determine the weights. Moreover, we showed that hyperparameter inference in \ourmethod can be parallelized in order to enable efficient learning. Finally, we compared our method against a set of GP-based and neural-network-based meta-learning methods. \ourmethod consistently performed well across various benchmarks and number of meta-tasks.

\clearpage

\bibliography{references}
\bibliographystyle{plainnat}

\clearpage
\appendix
\thispagestyle{empty}
\onecolumn

\part{Overview} 

In the appendix we provide the detailed proofs for all claims in the paper, complexity analysis, ablation studies, and details on experiments. An overview is shown below.

\noptcrule
\parttoc 
\clearpage

\section{Kernel Properties}
\label{ap:kernel_properties}

We start by showing that for a single meta-task, the coregionalization is indeed positive semi-definite.

\begin{lemma}
\label{lem:ws_positive_semidefinite}
For any $w_\meta \in \mathbb{R}$ 
\begin{equation}
\mathbf{W}_\meta = \begin{bmatrix}
    1 & w_\meta \\
    w_\meta & w_\meta^2
    \end{bmatrix}
\end{equation}
is positive semi-definite.
\end{lemma}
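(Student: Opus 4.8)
The plan is to exhibit $\mathbf{W}_\meta$ as a rank-one outer product, which immediately forces positive semi-definiteness. Concretely, set $\mathbf{v}_\meta = (1,\, w_\meta)^\top \in \mathbb{R}^2$ and observe that
\begin{equation*}
\mathbf{v}_\meta \mathbf{v}_\meta^\top
= \begin{bmatrix} 1 \\ w_\meta \end{bmatrix}\begin{bmatrix} 1 & w_\meta \end{bmatrix}
= \begin{bmatrix} 1 & w_\meta \\ w_\meta & w_\meta^2 \end{bmatrix}
= \mathbf{W}_\meta .
\end{equation*}
Then for any $\mathbf{z} \in \mathbb{R}^2$ we have $\mathbf{z}^\top \mathbf{W}_\meta \mathbf{z} = \mathbf{z}^\top \mathbf{v}_\meta \mathbf{v}_\meta^\top \mathbf{z} = (\mathbf{v}_\meta^\top \mathbf{z})^2 \geq 0$, so $\mathbf{W}_\meta \succeq 0$. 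This works verbatim for every $w_\meta \in \mathbb{R}$, including the limiting case $w_\meta = 0$ where the matrix degenerates to $\operatorname{diag}(1,0)$.

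As an alternative (or cross-check), one can invoke Sylvester's criterion for the PSD case: a symmetric $2\times 2$ matrix is positive semi-definite iff both diagonal entries and the determinant are non-negative. Here the diagonal entries are $1 \geq 0$ and $w_\meta^2 \geq 0$, and $\det \mathbf{W}_\meta = 1\cdot w_\meta^2 - w_\meta \cdot w_\meta = 0 \geq 0$, giving the claim. The zero determinant is of course just the statement that the matrix is rank one, consistent with the outer-product argument and with the perfect-correlation interpretation $|\mathrm{Corr}(\tilde f_\meta, f_\meta)| = |w_\meta / \sqrt{1 \cdot w_\meta^2}| = 1$ highlighted in the main text.

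There is essentially no obstacle here: the result is a one-line linear-algebra fact, and the only thing to be slightly careful about is the edge case $w_\meta = 0$, which the outer-product form handles automatically. I would present the outer-product proof as the main argument since it is the cleanest and also generalizes naturally to the multi-task coregionalization matrices $\mathbf{W}_\meta$ built from the maps $g_\meta$ in \cref{eq:our_joint_kernel}, where the same "write it as $\mathbf{g}_\meta \mathbf{g}_\meta^\top$" idea will be reused.
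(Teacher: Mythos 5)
Your proof is correct and essentially the same as the paper's: writing $\mathbf{W}_\meta = \mathbf{v}_\meta \mathbf{v}_\meta^\top$ and noting $\mathbf{z}^\top \mathbf{W}_\meta \mathbf{z} = (\mathbf{v}_\meta^\top \mathbf{z})^2 \geq 0$ is just a repackaging of the paper's direct computation $\x^\mathrm{T}\mathbf{W}_\meta\x = (x_1 + w_\meta x_2)^2 \geq 0$. The Sylvester cross-check is fine but unnecessary; no further changes needed.
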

\begin{proof}
Let $\x = (x_1, x_2)\in\mathbb R^2$, then $\x^T \mathbf{W}_\meta \x = x_1^2 + 2 w_\meta x_1 x_2 + w_\meta^2 x_2^2 = (x_1 + w_\meta x_2)^2 \geq 0$, i.e. $\mathbf W_\meta$ is positive semi-definite. \qedhere
\end{proof}

\begin{lemma}
For any $w_\meta\in\mathbb R$ the coregionalization matrices specified by \cref{eq:correg_matrices} are positive semi-definite.
\label{lem:coreg_matrices_psd}
\end{lemma}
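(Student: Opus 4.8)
The plan is to reduce the claim to the $2\times 2$ case already established in \cref{lem:ws_positive_semidefinite}. The key observation is structural: for a fixed meta-task $\meta \in \metaset$, the matrix $\mathbf{W}_\meta$ defined by \cref{eq:correg_matrices} is a $(\nummeta+1)\times(\nummeta+1)$ matrix all of whose entries vanish except those in the $2\times 2$ principal submatrix indexed by the rows and columns $\{\meta, \test\}$, where it coincides with $\left(\begin{smallmatrix} 1 & w_\meta \\ w_\meta & w_\meta^2\end{smallmatrix}\right)$. The matrix $\mathbf{W}_\test$ is even simpler: its only nonzero entry is $[\mathbf{W}_\test]_{(\test,\test)}=1$, so it is diagonal with a single nonnegative entry.

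First I would fix an arbitrary vector $\x\in\mathbb R^{\nummeta+1}$ with components $x_\nu$ indexed by $\nu\in\metaset\cup\{\test\}$ and evaluate the quadratic form $\x^{T}\mathbf{W}_\meta\x$. Because every entry of $\mathbf{W}_\meta$ outside the $\{\meta,\test\}$ block is zero, the double sum collapses to the two surviving coordinates and yields $\x^{T}\mathbf{W}_\meta\x = x_\meta^2 + 2 w_\meta x_\meta x_\test + w_\meta^2 x_\test^2$. By the completion-of-square computation in the proof of \cref{lem:ws_positive_semidefinite}, this is $(x_\meta + w_\meta x_\test)^2 \geq 0$, so $\mathbf{W}_\meta\succeq 0$. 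The same reduction gives $\x^{T}\mathbf{W}_\test\x = x_\test^2 \geq 0$, so $\mathbf{W}_\test\succeq 0$ as well. Since $\x$ was arbitrary and every coregionalization matrix is of one of these two forms, all matrices in \cref{eq:correg_matrices} are positive semi-definite. Equivalently, one can phrase the step as the elementary fact that padding a positive semi-definite matrix with zero rows and columns preserves positive semi-definiteness.

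I do not expect a genuine obstacle here, since the content is essentially a bookkeeping argument on top of \cref{lem:ws_positive_semidefinite}. The only points that deserve care are making explicit that "all other entries zero" really does let the quadratic form reduce to the two relevant coordinates (so that no cross terms with other tasks appear), and noting that the degenerate case $w_\meta = 0$ is handled automatically by the same formula, which then reads $x_\meta^2\geq 0$.
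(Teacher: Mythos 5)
Your proposal is correct and follows essentially the same route as the paper: observe that $\mathbf{W}_\meta$ is the $2\times 2$ matrix of \cref{lem:ws_positive_semidefinite} padded with zero rows and columns, so the quadratic form collapses to $(x_\meta + w_\meta x_\test)^2 \geq 0$, and likewise $\x^{T}\mathbf{W}_\test\x = x_\test^2 \geq 0$. No gaps.
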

\begin{proof}
Each matrix $\mathbf{W}_\meta$ has the same form as in \cref{lem:ws_positive_semidefinite}, but with additional zero rows and columns added.
That is, from \cref{lem:ws_positive_semidefinite} for any $\x= (\x_1, \dots, \x_{\nummeta + 1})\in\mathbb R^{\nummeta+1}$ we have 
\begin{align}
\x^T\mathbf W_\meta \x 
&= (x_\meta, x_{\nummeta+1})^T\begin{bmatrix}
    1 & w_\meta \\
    w_\meta & w_\meta^2
    \end{bmatrix} (x_\meta, x_{\nummeta+1})\geq 0.
\end{align}

According to \cref{eq:correg_matrices}, we have
\begin{equation*}
\mathbf{W}_\test = \begin{bmatrix} \mathbf{0} & \mathbf{0} \\ \mathbf{0} & 1 \end{bmatrix}\in \mathbb{R}^{(\nummeta + 1) \times (\nummeta + 1)},
\end{equation*}
from which we obtain 
$\x^\mathrm{T} \mathbf{W}_\test \x = \x_{\nummeta + 1}^2 \geq 0$
for any $\x= (\x_1, \dots, \x_{\nummeta + 1}) \in \mathbb{R}^{\nummeta + 1}$. \qedhere
\end{proof}

\jointkernel*
\begin{proof}

We begin by showing that the coregionalization matrices in \cref{eq:correg_matrices} are uniquely defined by  \cref{as:independent_sources,as:strong_correlation}. To see this, we first collect all terms in \cref{eq:multi_task_kernel}: 
\begin{align}
 k(( \x, \meta), ( \x', \meta)) 
  &= \cov{f_\meta(x), f_\meta(x')} 
   = k_\meta(x, x'), \quad \text{(\cref{as:independent_sources})} \\
 k(( \x, \meta), ( \x', \meta' \neq \meta))
  &= \cov{f_\meta(x), f_{\meta'}(x')}  
   = 0, \quad \text{(\cref{as:independent_sources})} \\
 k(( \x, \meta), ( \x', t))  
  &=\cov{f_\meta(x), f_\test(x')}\nonumber \\
  &=\cov{f_\meta(\x), \tilde f_\test + \sum_{\meta' \in \metaset} \tilde f_{\meta'}(\x')} \quad \text{(\cref{as:strong_correlation})} \nonumber \\
  &=\cov{f_\meta(\x), \sum_{\meta' \in \metaset} \tilde f_{\meta'}(\x')}  \quad \text{(\cref{as:strong_correlation})} \nonumber
\end{align}
The perfect (anti)correlation $\corr{f_\meta(\x), \tilde f_\meta(\x)}= \pm 1$  in \cref{as:strong_correlation} implies that $\tilde f_\meta(\x) = w_\meta f_\meta(\x) + c$ with $w_\meta, c \in \mathbb{R}$. Hence $\cov{f_\meta(\x), \tilde f_{\meta'}(\x')} = w_{m'} \cov{f_\meta(\x), f_{\meta'}(\x')}$. Together with \cref{as:independent_sources} this becomes $\cov{f_\meta(\x), \tilde f_{\meta'}(\x')} = \delta_{m, m'} w_{m} k_\meta(x, x')$. Using that the covariance is bilinear it follows that
\begin{align}
 k(( \x, \meta), ( \x', t))  
  &= \sum_{\meta' \in \metaset} \cov{f_\meta(\x), \tilde f_{\meta'}(\x')}   
   = w_\meta k_\meta(x, x').
\end{align}
Finally, we have
\begin{align}
 k(( \x, \test), ( \x', \test)) 
  &= \cov{f_\test(x), f_\test(x')} \nonumber \\
  &= k_\test(x, x') + \sum_{\meta \in \metaset} \cov{f_\test(\x), \tilde f_{\meta}(\x')} \quad \text{(\cref{as:strong_correlation} and }k_\test(x, x') = \cov{f_\test(x), \tilde f_\test(x')}) \nonumber \\
  &= k_\test(x, x') + \sum_{\meta \in \metaset} \cov{\tilde f_\test + \sum_{\meta' \in \metaset} \tilde f_{\meta'}(\x), \tilde f_{\meta}(\x')} \quad \text{(\cref{as:strong_correlation})} \nonumber \\
  &= k_\test(x, x') + \sum_{\meta \in \metaset} \cov{\tilde f_\meta(\x), \tilde f_{\meta}(\x')}  \quad \text{(\cref{as:independent_sources}, \cref{as:strong_correlation})}  \nonumber \\
\end{align}
Since $\tilde f_\meta(\x) = w_\meta f_\meta(\x) + c$ as shown above and using bilinearity of the covariance, we have 
\begin{align}
 k(( \x, \test), ( \x', \test)) 
 &= k_\test(x, x') + \sum_{\meta \in \metaset} w_\meta^2 k_\meta(x, x').
\end{align}
By collecting the coefficients corresponding to the $k_\meta$, $k_\test$ we obtain the coregionalization matrices. 

Further, from \cref{lem:coreg_matrices_psd} and \citep{alvarez2012kernels} we obtain that $k_\method^\mathrm{joint}$ is positive semi-definite as a linear combination of positive semi-definite coregionalization matrices and kernels.
That is, \cref{eq:correg_matrices} yield a valid multi-task kernel.

Finally, from the definition of the Dirac-delta and $g_\meta(\nu)$, we have 
\begin{align}
\delta_{\nu=t} \delta_{\nu'=t}
&= \begin{cases}
1&\text{if } \nu=\nu'=t,\\
0&\text{else,}\\ 
\end{cases}
&
g_\meta(\nu)g_\meta(\nu')= 
\begin{cases}
w_\meta^2&\text{if } \nu=\nu'=t,\\
w_\meta&\text{if } \nu=\meta, \nu'=t,\text{or } \nu=t, \nu'=\meta,\\
1&\text{if } \nu=\nu'=\meta,\\
0&\text{else.}
\end{cases}
\end{align}
The statement from the Lemma follows from $\delta_{\nu=t} \delta_{\nu'=t}=[\mathbf{W}_\test]_{(\nu, \nu')}, \, g_\meta(\nu)g_\meta(\nu')=[\mathbf{W}_\meta]_{(\nu, \nu')}$ for all $\nu, \nu'$ and the fact that \cref{eq:correg_matrices} yield a valid kernel.
\qedhere
\end{proof}

\testprior*
\begin{proof}
    According to \cref{eq:our_joint_kernel}, the joint prior model for the meta-observations $\mathbf{y}_\mathrm{meta} = (\mathbf{y}_1, \dots, \mathbf{y}_\mathrm{meta})$ and the test-task function value at a query point $\x$ given by
    \begin{equation}
        \begin{bmatrix}
            \mathbf{y}_\mathrm{meta} \\
            f_\test(\x)
        \end{bmatrix} \sim \mathcal{N} \left(
        \mathbf{0}, \begin{bmatrix}
            \mathbf{K}_\mathrm{meta} & \mathbf{k}_\mathrm{meta} \\
            \mathbf{k}_\mathrm{meta}^\mathrm{T} & \mathbf{K}_\test
        \end{bmatrix}
        \right)
    \end{equation}
    where
    \begin{align*}
    \mathbf{K}_\mathrm{meta} &= \diag\big(k_1(\X_1, \X_1) + \sigma_1^2 \mathbf{I}, \dots, k_\nummeta(\X_\nummeta, \X_\nummeta) + \sigma_\nummeta^2 \mathbf{I} \big), \\
    \mathbf{k}_\mathrm{meta} &= \big( w_1 k_1(\X_1, \x), \dots, w_\nummeta k_\nummeta(\X_\nummeta, \x) \big), \\
    \mathbf{K}_\test &= \big( k_\test(\x, \x) + \sum_\meta w_\meta^2 k_\meta(\x, \x) \big).
    \end{align*}
    Conditioning on the meta-data $\metadata$ yields
    \begin{equation}
     p( f_\test \mid \metadata, \x) = \mathcal{N}\left( m_\method(\x), \Sigma_\method(\x, \x) \right),
    \end{equation}
    where the test-task prior mean $m_\method$ and covariance $\Sigma$ are given by the standard Gaussian conditioning rules
    \begin{align*}
        m_\method(\x) &= \mathbf{k}_\mathrm{meta}^\mathrm{T} \mathbf{K}_\mathrm{meta}^{-1} \mathbf{y}_\mathrm{meta}, \\
        \Sigma_\method(\x, \x) &= \mathbf{K}_\test - \mathbf{k}_\mathrm{meta}^\mathrm{T} \mathbf{K}_\mathrm{meta}^{-1} \mathbf{k}_\mathrm{meta}.
    \end{align*}
    Now since $ \mathbf{K}_\mathrm{meta}$ is block-diagonal, we have
    \begin{equation*}
    \mathbf{K}_\mathrm{meta}^{-1} = \diag\left( (k_1(\X_1, \X_1) + \sigma_1^2 \mathbf{I})^{-1}, \dots, (k_\nummeta(\X_\nummeta, \X_\nummeta) + \sigma_\nummeta^2 \mathbf{I})^{-1} \right),
    \end{equation*}
    so that
    \begin{align*}
        m_\method(\x) 
        &= \sum_\meta w_\meta k_\meta(\x, \X_\meta) ( k_\meta(\X_\meta, \X_\meta) + \sigma_\meta^2 \mathbf{I} )^{-1} \mathbf{y}_\meta, \\
        &= \sum_\meta w_\meta \mu_\meta(\x),
    \end{align*}
    where $\mu_\meta(\x)$ is the per meta-task posterior mean after conditioning on the corresponding data $\mathcal{D}_\meta$. Similarly, for the covariance we have
    \begin{align*}
        \Sigma_\method(\x, \x)
        &= \mathbf{K}_\test - \mathbf{k}_\mathrm{meta}^\mathrm{T} \mathbf{K}_\mathrm{meta}^{-1} \mathbf{k}_\mathrm{meta}, \\
        &= k_\test(\x, \x) + \sum_\meta w_\meta^2 k_\meta(\x, \x) - \sum_\meta w_\meta k_\meta(\x, \X_\meta) ( k_\meta(\X_\meta, \X_\meta) + \sigma_\meta^2 \mathbf{I} )^{-1} w_\meta k_\meta(\X_\meta, \x), \\
        &= k_\test(\x, \x) + \sum_\meta w_\meta^2 k_\meta(\x, \x) - w_\meta^2 k_\meta(\x, \X_\meta) ( k_\meta(\X_\meta, \X_\meta) + \sigma_\meta^2 \mathbf{I} )^{-1} k_\meta(\X_\meta, \x), \\
        &= k_\test(\x, \x) + \sum_\meta w_\meta^2 \big(  k_\meta(\x, \x) - k_\meta(\x, \X_\meta) ( k_\meta(\X_\meta, \X_\meta) + \sigma_\meta^2 \mathbf{I} )^{-1} k_\meta(\X_\meta, \x) \big), \\
        &= k_\test(\x, \x) + \sum_\meta w_\meta^2 \Sigma_\meta(\x, \x),
    \end{align*}
    where $\Sigma_\meta$ is the corresponding per meta-task posterior covariance. \qedhere
    
\end{proof}

\section{Complexity Analysis}
\label{ap:complexity_analysis}
Here we analyze the computational complexity of evaluating the likelihood of the test-data under the prior \cref{eq:test_posterior}. We break this down into the complexities for evaluating the posterior mean and the posterior covariance of the test-task.

As an intermediate step, the kernel matrices of the meta-tasks need to be inverted, which is of complexity $O(\sum_{\meta \in \metaset} N_\meta^3)$. This only needs to happen once and then those inverted matrices can be cached.

To construct the test-task prior we need to evaluate each meta-task posterior mean at all x-values in the test-task dataset $\X_\test$. Given the cached kernel matrices each evaluation costs $O(N_\meta^2)$ multiplications resulting in a complexity of $O(N_\test \sum_{\meta \in \metaset} N_\meta^2)$. In addition, we also need to evaluate the posterior covariance at the same parameters. In contrast to the posterior mean computation, evaluating the covariance matrix at all parameters  $\X_\test$ jointly results in a lower computational complexity than isolated forward calculation of all $N_\test^2$ entries. The corresponding matrix multiplication in \cref{eq:test_posterior} is of order $O(N_\test^2 N_\meta + N_\test N_\meta^2)$. The complexities for all meta-tasks add up to $O(N_\test^2 \sum_{\meta \in \metaset} N_\meta + N_\test \sum_{\meta \in \metaset} N_\meta^2)$.

Further in the likelihood we also need to evaluate the test-task posterior at the test-task data points. This requires to evaluate \cref{eq:test_posterior} and is dominated by the inversion of the test-task kernel matrix, which is of order $O(N_\test^3)$.

Summarizing the complexity of evaluating the likelihood is given by the complexity of inverting the meta-task kernel matrices $O(\sum_{\meta \in \metaset} N_\meta^3)$ once and the reoccurring cost of constructing the test-task prior and evaluating the test-task posterior, which is of order $O(N_\test^2 \sum_{\meta \in \metaset} N_\meta + N_\test \sum_{\meta \in \metaset} N_\meta^2 + N_\test^3)$.

As mentioned in \cref{sec:our_approach}, we can use \cref{as:likelihood_modularization} in order to cache intermediate results in order to further reduce complexity.

\clearpage

\section{Details on Experiments}
\FloatBarrier
\label{ap:experiment_details}
In all experiments we use BoTorch's implementation of Upper Confidence Bound acquisition function with the exploration coefficient $\beta^{1/2} = 3$. For the experiments on the synthetic benchmarks, the meta and test functions are sampled randomly from a function family, while the meta-data parameters are sampled uniformly from the task's domain $D$. We optimize the test function from scratch without initial samples. We add i.i.d. zero-mean Gaussian observational noise during the data generation with a standard deviation of 1.0 for Branin and 0.1 for Hartmann 3D and 6D. This amount of noise corresponds to about one percent of the output scale of the benchmark in the search space.

We implement all GP models using the squared-exponential kernel with automatic relevance determination. The GP hyperparameters are optimized at each BO iteration by maximizing the likelihood of the observed data using the L-BFGS-B optimizer with 5 initial guesses. These guesses are sampled from the prior of the hyperparameter. We normalize the GP data as explained in \cref{ap:data_normalization}. All other details regarding training and prediction are kept fixed to BoTorch's defaults \citep{balandat2020botorch}.

\subsection{Data Normalization}
\label{ap:data_normalization}
We follow the common practice and normalize the GP data. The search space of the input parameters is rescaled to the unit hypercube in the data pre-processing pipeline. The observations are normalized to zero-mean unit-variance individually for each GP. An exception is \ourmethod's test-task GP for which observations, $\mathbf{y}_\test$, are normalized with respect to the mean and variance of $\mathbf{y}_\test \cup \mathbf{y}_{1:M}$. We prefer this strategy over normalizing solely with respect to $\mathbf{y}_\test$ whose statistics are volatile at the start of the optimization. 

\subsection{Implementation Details of \ourmethod}
\label{ap:implementation_details_our_method}
We implement \ourmethod according to \cref{alg:overview}. 
\begin{enumerate}
    \item We train individual GP models on the data of each meta-task and independently optimize the marginal likelihoods. These meta-data are individually normalized to zero-mean unit-variance.
    \item We construct the test-task prior as in \cref{thm:test_prior} by summing over the posteriors of the GP of each meta-task. \cref{as:likelihood_modularization} allows us to cache $\mu_\meta(\mathbf{X}_\test)$ and $\Sigma_\meta(\mathbf{X}_\test, \mathbf{X}_\test)$.
    \item We optimize the hyperparameters, $\theta_\test$, of the test-task GP. For this we normalize $\mathbf{y}_\test$ as explained in \cref{ap:data_normalization}. This joint normalization implies that $\mathbf{y}_\test$ generally have a non-zero mean and non-unit variance.
    \item We condition the test-task GP on $\mathcal{D}_\test$. Since $\mathbf{y}_\test$ are not standardized, the optimum hyperparameters of the test-task kernel, $k_\test$, are distributed differently than those of standard GPs. We consider this when choosing prior distributions for the GP hyperparameters in \cref{ap:prior_hps}.
\end{enumerate}

\subsection{Importance of Satisfying the Assumptions of \ourmethod in Practice}
\label{ap:validity_of_scamlgp_assumptions}
\ourmethod is a modular and flexible method for meta-learning that is scalable in the number of tasks. \ourmethod can be applied to a wide variety of meta-learning settings as demonstrated in \cref{sec:experiments} and is designed to excel in the regime of relatively little amount of data per meta-task.

While the assumptions of \ourmethod may seem restrictive, \cref{as:independent_sources,as:strong_correlation,as:likelihood_modularization}, empirical results clearly demonstrate that \ourmethod performs excellently in most scenarios. This is because \ourmethod has built-in mechanisms that can easily handle violations of these assumptions. In particular,
\begin{itemize}
    \item \cref{as:independent_sources}, independence of the prior distributions of meta-task models, is clearly violated in most use cases. However, they do not limit learning as long as each meta-task contains sufficient data allowing for a probabilistic description of the meta-task function. This amount of data can be surprisingly little for GPs as seen in \cref{fig:ablation_summary} -- as little as 16 points in a six-dimensional search space are enough to significantly improve over GPBO and other meta-learning baselines.
    \item \cref{as:strong_correlation}, perfect correlation between meta-task models and components of the test-task models, is likewise violated in most use cases. \ourmethod can easily handle such violations via the test-task component of the kernel, $k_\test(\mathbf{x}, \mathbf{x}')$ (see \cref{eq:test_prior}), which can learn arbitrary non-linear deviations from the prediction of the meta-task models. In the worst-case scenario in which no linear combination of meta-task functions is useful, \ourmethod simply ignores the meta-data and optimizes the process from scratch using the test-task component of the kernel. Such an example can be seen in \cref{fig:hpo} (FC-Net Protein benchmark).
    \item \cref{as:likelihood_modularization}, independence between meta-task HPs, $\theta_m$, and test-task observations $\mathcal{D}_\test$, is motivated by the meta-learning setting in which meta-data are abundant but test-data are scarce.
\end{itemize}

Only \cref{as:independent_sources} is necessary for making our method scalable. \cref{as:strong_correlation} confers an explainable structure to the kernel and to the test-task prior in \cref{eq:test_prior} but other choices are, in principle, also possible. Finally, \cref{as:likelihood_modularization} leads to a particularly efficient implementation of \ourmethod.

\subsection{Implementation Details of \RMBO}
\label{ap:implementation_details_rmbo}
We implement \RMBO according to the UCB based algorithm in \citep{Dai2022provably} with the following choices. For ease of implementation and to be consistent with other methods, we set the exploration coefficients to a constant value $\beta_t=\tau=3$. To ensure invariance of the acquisition function maximizer under a joint rescaling of all functions, we divide the bounds on the function gap $\bar d_{t, i}$ by the mean of the standard deviations across all meta-tasks before using them in the calculation of the weights $\nu_t$ and $\omega_i$. Finally, we set the hyperparameter $\delta=0.05$.

\subsection{Choices of Prior Distributions and Constraints for GP Hyperparameters}
\label{ap:prior_hps}
In the following we discuss our choice of the prior distribution of the GP hyperparameters as well as the constraints we place on them.

\paragraph{Lengthscale} The prior of the lengthscale hyperparameter, $\theta_l$, is kept fixed to BoTorch's default, $\theta_l \sim \Gamma(3, 6)$, where $\Gamma$ denotes the Gamma distribution. This corresponds to a 5-95\% quantile range of 0.14 -- 1.05.
Exception to this is the test-task kernel of \StackGP and \ourmethod, $k_\test$, with $\log(\theta_l) \sim \mathcal{N}(0.5, 1.5)$ with a 5-95\% quantile range of 0.14 -- 19.44. With this broader distribution we introduce less inductive bias while keeping the risk of over-fitting low owing to the existence of an informative prior from the models of the meta-tasks. The lengthscale parameter is constrained to lie between $10^{-4} - 10^2$ to avoid running into numerical issues. 

\paragraph{Outputscale} The prior of the outputscale hyperparameter (signal variance), $\theta_o$, is also kept fixed to BoTorch's default, ${\theta_o \sim \Gamma(2.0, 0.15)}$ with a 5-95\% quantile range of 2.4 -- 31.8. Exception to this is the test-task kernel of \ourmethod due to the different normalization strategy discussed in \cref{ap:data_normalization}. We choose a less biased distribution to accommodate this change in data normalization, $\log(\theta_o) \sim \mathcal{N}(-2.0, 3.0)$ with a 5-95\% quantile range of $10^{-3} - 18.8$. The outputscale parameter is constrained to lie between $10^{-4} - 10^2$ to avoid running into numerical issues.

\paragraph{Observation noise} The prior of the observational-noise hyperparameter (noise variance), $\theta_n$, is set to $\log(\theta_n) \sim \mathcal{N}(-8, 2)$ with 5-95\% quantile range of $1.25\cdot 10^{-5} - 9\cdot 10^{-3}$. The noise parameter is constrained to lie between $10^{-8} - 10^{-2}$ to avoid running into numerical issues.

\paragraph{Weights of \ourmethod} We choose a generic prior distribution for the weights, ${w_\meta \sim \Gamma(1, 1)}$, independently for each meta-task $\meta$. This distribution is flat and thus unbiased for $w_\meta \lesssim 1$, and decaying quickly for $w_\meta \gtrsim 1$. We constrain the weights to be strictly positive, $w_\meta > 0$, implying that we only learn correlations and ignore anti-correlations.

\subsection{Synthetic Benchmarks}
\label{ap:synthetic_benchmarks}
\FloatBarrier

\paragraph{The Branin Benchmark}

The Branin function is a two-dimensional function with three global optima defined as
\begin{equation}
f(x_1, x_2; a, b, c, r, s, t) = a(x_2-bx_1^2+cx_1-r)+s(1-t)\cos(x_1)+s,\quad x_1\in\left[-5, 10\right], x_2\in\left[0, 15\right]
\label{eq:branin_fun}
\end{equation}
We convert it to a meta-learning benchmark by choosing the following probability distributions for the parameters $(a, b, c, r, s, t)$:
\begin{equation}
\begin{aligned}
a &\sim\mathcal{U}(0.5, 1.5), \\
b &\sim\mathcal{U}(0.1, 0.15), \\
c &\sim\mathcal{U}(1, 2),  \\
r &\sim\mathcal{U}(5, 7), \\
s &\sim\mathcal{U}(8, 12), \\
t &\sim\mathcal{U}(0.03, 0.05).
\end{aligned}
\label{eq:branin_family}
\end{equation}
The Branin meta-learning benchmark is thus defined over a six-dimensional uniform distribution. For generating the data of $n_s$ meta-tasks, we draw $n_s$ random tasks using \cref{eq:branin_family}, and sample a given number of parameters per task uniformly at random.

\paragraph{The Hartmann 3D Benchmark}
The Hartmann~3D function is a sum of four three-dimensional Gaussian distributions and is defined by
\begin{equation}
f(\mathbf{x}; \bm{\alpha}) = -\sum_{i=1}^4\alpha_i\exp\left(-\sum_{j=1}^3 A_{i,j}\left(x_j-P_{i,j}\right)^2 \right),\quad \mathbf{x} \in \left[0, 1\right]^3,
\label{eq:hartmann3_fun}
\end{equation}
with
\begin{equation*}
\mathbf{A} = \begin{bmatrix}
3.0 &10  &30 \\ 
0.1 &10  &35 \\ 
3.0 &10  &30 \\ 
0.1 &10  &35 
\end{bmatrix},\quad \mathbf{P} = 10^{-4}
\begin{bmatrix}
3689 &1170  &2673 \\ 
4699 &4387  &7470 \\ 
1091 &8732  &5547 \\ 
381  &5743  &8828 
\end{bmatrix}.
\end{equation*}
The original Hartmann~3D function is given by $\bm{\alpha}=(1.0, 1.2, 3.0, 3.2)^T$. In this paper, a family of functions is formed by choosing the following probability distributions for the parameters $\bm{\alpha}=(\alpha_1, \alpha_2, \alpha_3, \alpha_4)^T$:
\begin{equation}
\alpha_1\sim\mathcal{U}(1.00, 1.02), \quad
\alpha_2\sim\mathcal{U}(1.18, 1.20), \quad
\alpha_3\sim\mathcal{U}(2.8, 3.0), \quad
\alpha_4\sim\mathcal{U}(3.2, 3.4).
\label{eq:hartmann3_family}
\end{equation}
The Hartmann~3D family therefore spans a four-dimensional uniform distribution. For generating the data of $\nummeta$ meta-tasks, we draw $\nummeta$ random tasks using \cref{eq:hartmann3_family}, and sample a given number of parameters per task uniformly at random.

\paragraph{The Hartmann 6D Benchmark}
The Hartmann 6D function is a sum of four six-dimensional Gaussian distributions with six local optima and one global optimum and is defined by
\begin{equation}
f(\mathbf{x}; \bm{\alpha}) = -\sum_{i=1}^4\alpha_i\exp\left(-\sum_{j=1}^6 A_{i,j}\left(x_j-P_{i,j}\right)^2 \right),\quad \mathbf{x} \in \left[0, 1\right]^6,
\label{eq:hartmann6_fun}
\end{equation}
with
\begin{equation*}
\begin{split}
\mathbf{A} &= \hphantom{10^{-4}}
\begin{bmatrix}
10   & 3   & 17   & 3.5 & 1.7 & 8 \\
0.05 & 10  & 17   & 0.1 & 8   & 14 \\
3    & 3.5 & 1.7  & 10  & 17  & 8 \\
17   & 8   & 0.05 & 10  & 0.1 & 14
\end{bmatrix},\\
\mathbf{P} &= 10^{-4}
\begin{bmatrix}
1312 & 1696 & 5569 & 124  & 8283 & 5886 \\
2329 & 4135 & 8307 & 3736 & 1004 & 9991 \\
2348 & 1451 & 3522 & 2883 & 3047 & 6650 \\
4047 & 8828 & 8732 & 5743 & 1091 & 381
\end{bmatrix}.
\end{split}
\end{equation*}
The original Hartmann~6D function is given by $\bm{\alpha}=(1.0, 1.2, 3.0, 3.2)^T$. We convert it to a meta-learning benchmark by placing the following probability distributions for $\bm{\alpha}=(\alpha_1, \alpha_2, \alpha_3, \alpha_4)^T$ based on the emukit~\citep{emukit2019} implementation\footnote{\url{https://web.archive.org/web/20230126095651/https://github.com/EmuKit/emukit/blob/b4e59d0867c3a36b72451e7ec5864491d3c11bbe/emukit/test_functions/multi_fidelity/hartmann.py}}:
\begin{equation}
\alpha_1\sim\mathcal{U}(1.00, 1.02), \quad
\alpha_2\sim\mathcal{U}(1.18, 1.20), \quad
\alpha_3\sim\mathcal{U}(2.8, 3.0), \quad
\alpha_4\sim\mathcal{U}(3.2, 3.4).
\label{eq:hartmann6_family}
\end{equation}
The Hartmann~6D meta-learning benchmark is therefore defined over a four-dimensional uniform distribution. For generating the data of $M$ meta-data sets, we draw $M$ random tasks using \cref{eq:hartmann6_family}, and sample a given number of parameters per meta-task uniformly at random.

\FloatBarrier
\subsection{Meta-Learning and Bayesian Optimization on Tabular Benchmarks}
\label{ap:bo_on_tabular_benchmarks}
\FloatBarrier

For the machine learning hyperparameter optimization benchmarks we use tabular benchmarks from the HPOBench framework \citep{eggensperger2021hpobench} from the Git branch "master" at commit 47bf141 (licensed under Apache-2.0). The objective values reported to each optimizer are average objective values over all available seeds in the look up table for each configuration respectively, i.e. no specific seed is given to HPOBench's objective function.
All search spaces consist of ordinal parameters due to the tabular nature of the benchmark and do not contain any conditions, i.e., levels hierarchy. We carry out meta-learning and Bayesian optimization on these tabular benchmarks according to \cref{alg:bo_on_tabular_benchmarks}.

\begin{algorithm}
   \caption{Meta-Learning and Bayesian Optimization on Tabular Benchmarks}
   \label{alg:bo_on_tabular_benchmarks}
\begin{algorithmic}[1]
   \STATE {\bfseries Input:} budget $T$, tabular data $\mathcal{D}=\{X, Y\}$, task-ID set $\mathcal{I}$, num. parameters per meta-task $N_\meta$
   \STATE Sample meta-tasks, $\mathcal{I}_\meta$, $\meta \in \mathcal{M} = \{1, 2, \ldots, M\}$, and test-task, $\mathcal{I}_\test$, IDs from $\mathcal{I}$.
   \STATE Generate the meta-data, $\mathcal{D}_\meta = \mathcal{D} \cap \mathcal{I}_\meta$, and test-data, $\mathcal{D}_\test = \mathcal{D} \cap \mathcal{I}_\test$.
   \STATE For each meta-task, randomly sub-sample the meta-data to obtain $N_\meta$ points per meta-task.
   \STATE Condition the model on the meta-data $\mathcal{D}_{1:M}$.
   \STATE $\mathcal{D}_\test = \emptyset$
   \FOR{$i\in \{1,\ldots,T\}$}
      \STATE Calculate $\mathbf{x}_i$ with \cref{eq:bo} by maximizing the acquisition function over the discrete search space.
      \STATE Optimize likelihood function and condition the model on $\mathcal{D}_\test \leftarrow \mathcal{D}_\test \cup \{\mathbf{x}_i, y_i(\mathbf{x}_i)\}$ as in \cref{eq:test_posterior}.
      \ENDFOR
\end{algorithmic}
\end{algorithm}

Each of the following subsections contains the details for the respective task families.

\FloatBarrier
\subsubsection{Random Forest (RF)}
\FloatBarrier

The (meta-)tasks were sampled from the following HPOBench task IDs: 10101, 12, 146195, 146212, 146606, 146818, 146821, 146822, 14965, 167119, 167120, 168329, 168330, 168331, 168335, 168868, 168908, 168910, 168911, 168912, 3, 31, 3917, 53, 7592, 9952, 9977, 9981

The tabular data version used by HPOBench was \citep{Mallik2021RFv3}.

We fixed the benchmark's fidelities to the default fidelities provided by HPOBench since the multi-fidelity scenario was not considered in our experiments: $n\_estimators = 512$, $subsample = 1$

The search space contained the following parameters with their respective ordinal values (truncated after the third decimal place for readability).

\begin{tabular}{l | l}
 Name & Values \\
 \toprule
max\_depth & 1.0, 2.0, 3.0, 5.0, 8.0, 13.0, 20.0, 32.0, 50.0 \\
 \midrule
max\_features & 0.0, 0.111, 0.222, 0.333, 0.444, 0.555, 0.666, 0.777, 0.888, 1.0 \\
 \midrule
min\_samples\_leaf & 1.0, 3.0, 5.0, 7.0, 9.0, 11.0, 13.0, 15.0, 17.0, 20.0 \\
 \midrule
min\_samples\_split & 2.0, 3.0, 5.0, 8.0, 12.0, 20.0, 32.0, 50.0, 80.0, 128.0 \\
\end{tabular}

\subsubsection{Logistic Regression (LR)}
\FloatBarrier

The (meta-)tasks were sampled from the following HPOBench task IDs: 10101, 146195, 146606, 146821, 14965, 167120, 168330, 168335, 168908, 168910, 168912, 31, 53, 9952, 9981, 12, 146212, 146818, 146822, 167119, 168329, 168331, 168868, 168909, 168911, 3, 3917, 7592, 9977

The tabular data version used by HPOBench was \citep{Mallik2021LRv3}.

We fixed the benchmark's fidelities to the default fidelities provided by HPOBench since the multi-fidelity scenario was not considered in our experiments: $\mathrm{iter} = 1000$, $\mathrm{subsample} = 1.0$

The search space contained the following parameters with their respective ordinal values (truncated after the sixth decimal place for readability).

\begin{table}[h]
\begin{tabular}{l | p{12cm}}
 Name & Values \\
 \toprule
alpha & 0.000009, 0.000016, 0.000026, 0.000042, 0.000068, 0.000110, 0.000177, 0.000287, 0.000464, 0.000749, 0.001211, 0.001957, 0.003162, 0.005108, 0.008254, 0.013335, 0.021544, 0.034807, 0.056234, 0.090851, 0.146779, 0.237137, 0.383118, 0.618965, 1.0 \\
 \midrule
eta0 & 0.000009, 0.000016, 0.000026, 0.000042, 0.000068, 0.000110, 0.000177, 0.000287, 0.000464, 0.000749, 0.001211, 0.001957, 0.003162, 0.005108, 0.008254, 0.013335, 0.021544, 0.034807, 0.056234, 0.090851, 0.146779, 0.237137, 0.383118, 0.618965, 1.0 \\
\end{tabular}
\end{table}

\FloatBarrier
\subsubsection{Support Vector Machine (SVM)}
\FloatBarrier

The (meta-)tasks were sampled from the following HPOBench task IDs: 10101, 146195, 146606, 146821, 14965, 167120, 168330, 168335, 168908, 168910, 168912, 31, 53, 9952, 9981, 12, 146212, 146818, 146822, 167119, 168329, 168331, 168868, 168909, 168911, 3, 3917, 7592, 9977

The tabular data version used by HPOBench was \citet{Mallik2021SVMv3}.

We fixed the benchmark's fidelity to the default fidelity provided by HPOBench since the multi-fidelity scenario was not considered in our experiments: $subsample = 1.0$

The search space contained the following parameters with their respective ordinal values (truncated after the fourth decimal place for readability).

\begin{table}[h]
\begin{tabular}{l | p{12cm}}
 Name & Values \\
 \toprule
C & 0.0009, 0.0019, 0.0039, 0.0078, 0.0156, 0.0312, 0.0625, 0.125, 0.25, 0.5, 1.0, 2.0, 4.0, 8.0, 16.0, 32.0, 64.0, 128.0, 256.0, 512.0, 1024.0 \\
 \midrule
gamma & 0.0009, 0.0019, 0.0039, 0.0078, 0.0156, 0.0312, 0.0625, 0.125, 0.25, 0.5, 1.0, 2.0, 4.0, 8.0, 16.0, 32.0, 64.0, 128.0, 256.0, 512.0, 1024.0 \\
\end{tabular}
\end{table}

\FloatBarrier
\subsubsection{XGBoost (XGB)}
\FloatBarrier

The (meta-)tasks were sampled from the following HPOBench task IDs: 10101, 12, 146212, 146606, 146818, 146821, 146822, 14965, 167119, 167120, 168911, 168912, 3, 31, 3917, 53, 7592, 9952, 9977, 9981

The tabular data version used by HPOBench was \citet{Mallik2021XGBv3}.

We fixed the benchmark's fidelities to the default fidelities provided by HPOBench since the multi-fidelity scenario was not considered in our experiments: $n\_estimators = 2000, subsample = 1$

The search space contained the following parameters with their respective ordinal values (truncated after the fourth decimal place for readability).

\begin{table}[h]
\begin{tabular}{l | p{10cm}}
 Name & Values \\
 \toprule
colsample\_bytree & 0.1, 0.2, 0.3, 0.4, 0.5, 0.6, 0.7, 0.8, 0.9, 1.0 \\
 \midrule
eta & 0.0009, 0.0021, 0.0045, 0.0098, 0.0212, 0.0459, 0.0992, 0.2143, 0.4629, 1.0 \\
 \midrule
max\_depth & 1.0, 2.0, 3.0, 5.0, 8.0, 13.0, 20.0, 32.0, 50.0 \\
 \midrule
reg\_lambda & 0.0009, 0.0045, 0.0212, 0.0992, 0.4629, 2.1601, 10.0793, 47.0315, 219.4544, 1024.0 \\
\end{tabular}
\end{table}

\FloatBarrier
\subsubsection{FC-Net}
\FloatBarrier

We ran the following scenarios: Slice Localization (Slice), Protein Structure (Protein), Naval Propulsion (Naval), Parkinson's Telemonitoring (Parkinson's). We used these in a leave-one-out meta-learning setup, where one is the test-task and the other three the meta-tasks. To keep the search space non-hierarchical, we fixed the two activation function parameters to "relu" and the learning rate schedule to "cosine", which corresponds to the most optimal value across all benchmarks and yields the following, fully ordinal search space.

\begin{table}[h]
\begin{tabular}{l | p{10cm}}
 Name & Values \\
 \toprule
batch\_size & 8, 16, 32, 64 \\
 \midrule
dropout\_1 & 0.0, 0.3, 0.6 \\
 \midrule
dropout\_2 & 0.0, 0.3, 0.6 \\
 \midrule
init\_lr & 0.0005, 0.001, 0.005, 0.01, 0.05, 0.1 \\
\midrule
n\_units\_1 & 16, 32, 64, 128, 256, 512 \\
\midrule
n\_units\_2 & 16, 32, 64, 128, 256, 512 \\
\end{tabular}
\end{table}

The fidelity was fixed to the maximum value (usually 100 epochs) and the average for objective values across all available seeds was used.

\FloatBarrier
\subsection{Computational Resources}
\label{ap:computational_resources}
We conducted our experiments on Azure's Standard\_D64s\_v3 instances, which offer 64 virtual cores and are powered by "Intel(R) Xeon(R) CPU E5-2673 v4". All experiments and results shown in the paper and appendix took approximately 16.5 days worth of sequential compute in total.

\clearpage

\FloatBarrier
\section{Additional Experiments}
\label{ap:ablation}

We provide the simple-regret plots of the ablation study discussed in \cref{fig:ablation_summary}. In this study we explore the performance of the different baselines on the synthetic benchmarks for various configurations in the meta-data.

\subsection{Ablation studies on the Branin benchmark}
\FloatBarrier

\begin{figure}[h]
    \centering
    \includegraphics[width=\textwidth]{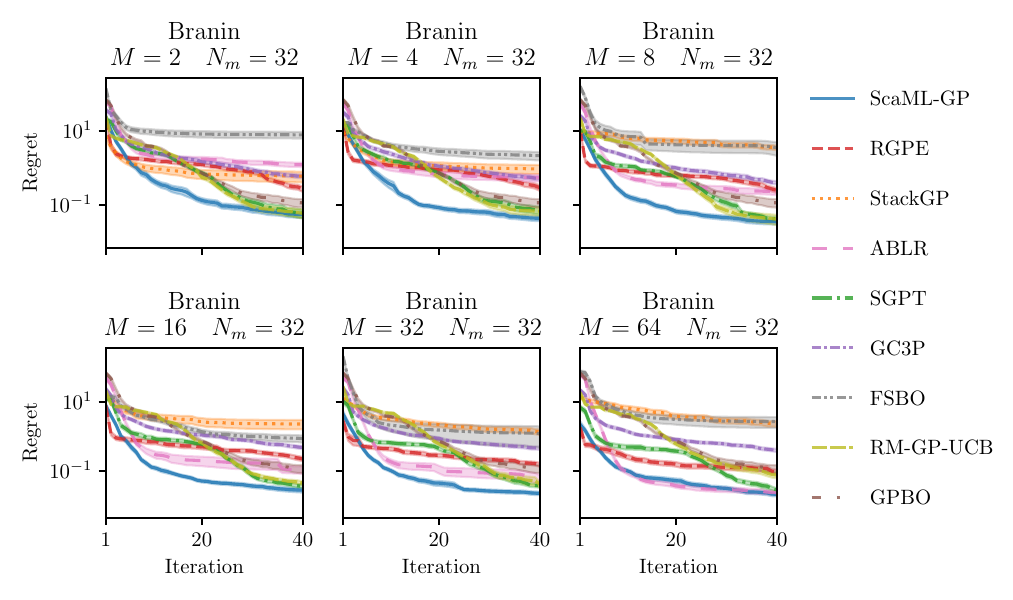}
    \caption{
        Experimental results on the Branin benchmark for different number of meta-tasks. Each meta-task is endowed with 32 points sampled uniformly at random from the task function's domain $D$.
        }
    \label{fig:ablation_branin_num_meta_tasks}
\end{figure}

\begin{figure}[h]
    \centering
    \includegraphics[width=\textwidth]{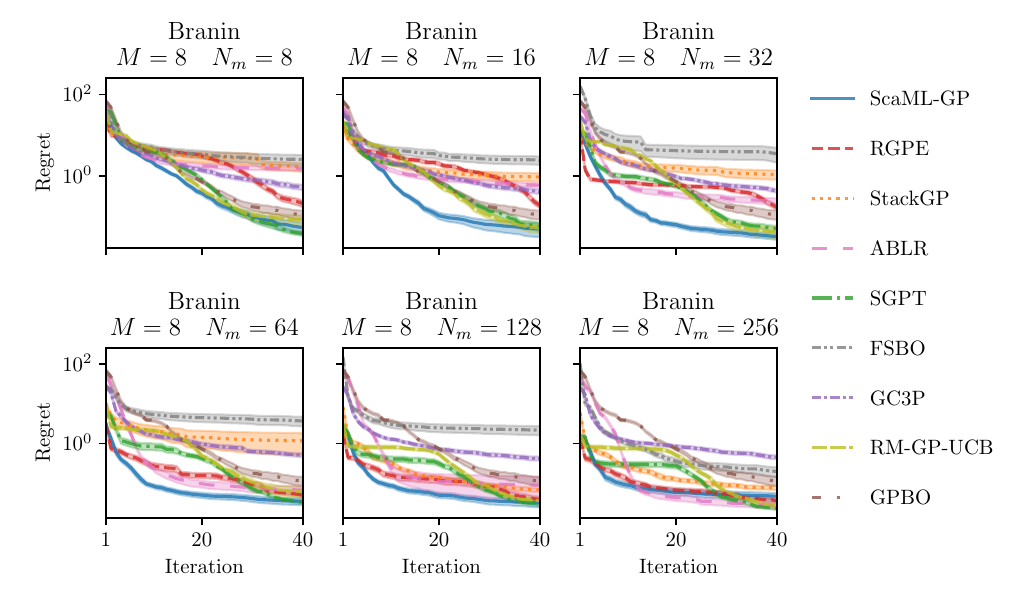}
    \caption{
        Experimental results on the Branin benchmark for different number of points per task, which are sampled uniformly at random from the task function's domain $D$. Here, we keep the number of meta-tasks fixed to eight.
        }
    \label{fig:ablation_branin_num_points_per_task}
\end{figure}
\clearpage

\FloatBarrier
\subsection{Ablation studies on the Hartmann6 benchmark}
\FloatBarrier

\begin{figure}[h]
    \centering
    \includegraphics[width=\textwidth]{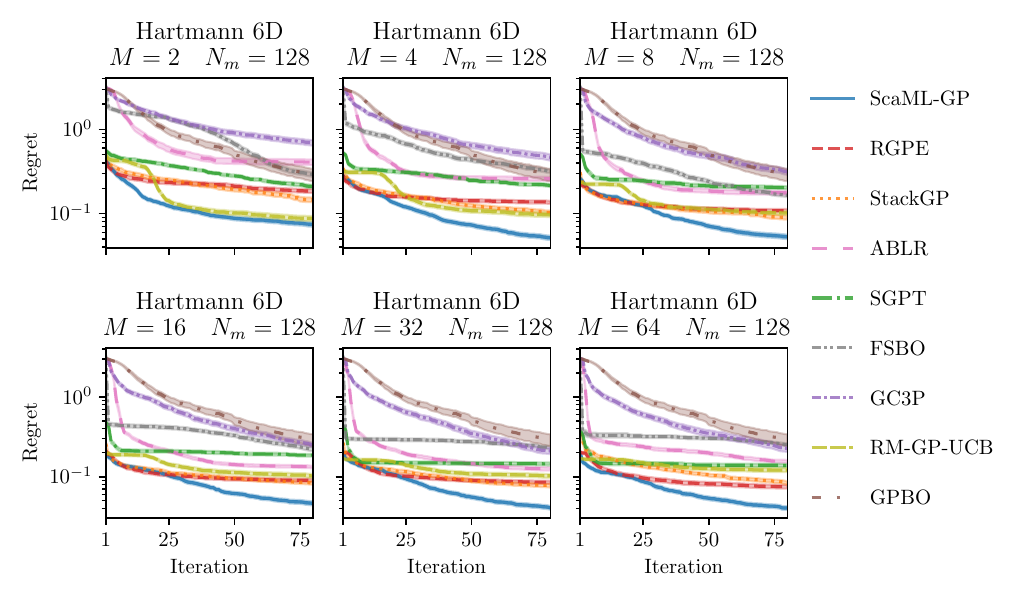}
    \caption{
        Experimental results on the Hartmann6 benchmark for different number of meta-tasks. Each meta-task is endowed with 128 points sampled uniformly at random from the task function's domain $D$.
        }
    \label{fig:ablation_hartmann6_num_meta_tasks}
\end{figure}

\begin{figure}
    \centering
    \includegraphics[width=\textwidth]{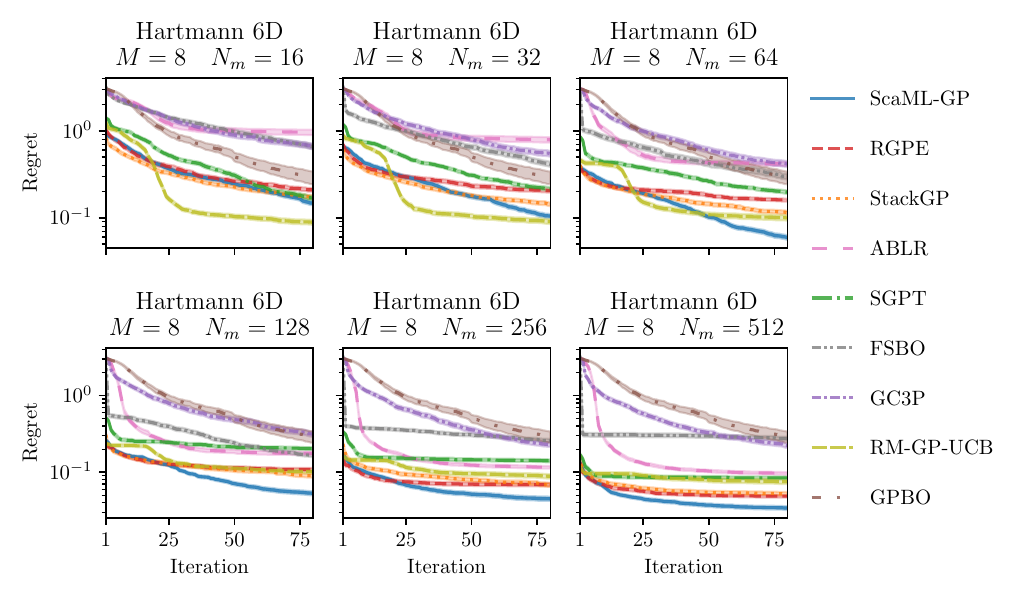}
    \caption{
        Experimental results on the Hartmann6 benchmark for different number of points per task, which are sampled uniformly at random from the task function's domain $D$. Here, we keep the number of meta-tasks fixed to eight.
        }
    \label{fig:ablation_hartmann6_num_points_per_task}
\end{figure}

\end{document}